\theoremstyle{plain}
\newtheorem{theorem}{Theorem}[section]
\newtheorem{lemma}[theorem]{Lemma}
\theoremstyle{definition}
\newtheorem{definition}[theorem]{Definition}
\newtheorem{assumption}[theorem]{Assumption}
\theoremstyle{remark}
\def\eg{\emph{e.g.}}
\def\ie{\emph{i.e.}}
\def\Fcal{{\mathcal F}}
\def\Ncal{{\mathcal N}}
\def\th{{\theta_{1:N}}}
\def\dw{{d_w}}
\def\dth{{d_{\theta}}}
\def\R{{\mathbb R}}
\DeclareMathOperator{\Tr}{Tr}
\DeclareMathOperator{\diag}{diag}
\DeclareMathOperator{\E}{\mathbb{E}}
\icmltitlerunning{Personalization Improves Privacy--Accuracy Tradeoffs in Federated Learning}
\begin{document}

\twocolumn[
\icmltitle{Personalization Improves Privacy--Accuracy Tradeoffs\\in Federated Learning}

\icmlsetsymbol{equal}{*}

\begin{icmlauthorlist}
\icmlauthor{Alberto Bietti}{nyu}
\icmlauthor{Chen-Yu Wei}{usc}
\icmlauthor{Miroslav Dudík}{msr}
\icmlauthor{John Langford}{msr}
\icmlauthor{Zhiwei Steven Wu}{cmu}

\end{icmlauthorlist}

\icmlaffiliation{nyu}{Center for Data Science, New York University}
\icmlaffiliation{usc}{University of Southern California}
\icmlaffiliation{msr}{Microsoft Research, New York}
\icmlaffiliation{cmu}{Carnegie Mellon University}

\icmlcorrespondingauthor{Alberto Bietti}{alberto.bietti@nyu.edu}

\icmlkeywords{Machine Learning, ICML}

\vskip 0.3in
]



\printAffiliationsAndNotice{}  

\begin{abstract}

Large-scale machine learning systems often involve data distributed across a collection of users.
Federated learning algorithms leverage this structure by communicating model updates to a central server, rather than entire datasets.
In this paper, we study stochastic optimization algorithms for a personalized federated learning setting involving local and global models subject to user-level (joint) differential privacy.
While learning a private global model induces a cost of privacy, local learning is perfectly private.
We provide generalization guarantees showing that coordinating local learning with private centralized learning yields a generically useful and improved tradeoff between accuracy and privacy. 
We illustrate our theoretical results with experiments on synthetic and real-world datasets.

\end{abstract}

\section{Introduction}
\label{sec:introduction}

Many modern applications of machine learning involve data from a large set of users. In such settings, both privacy considerations and  bandwidth limits may require keeping each user's data on their device,
instead communicating with a centralized server via shared model updates, a scenario commonly known as federated learning~\citep{kairouz2019advances}.
When the data distribution varies across users, it is often beneficial to consider personalized models where parts of the model (\eg, user-specific embeddings, or additive offsets) are local to each user, leading to updates that may be performed locally.
Such personalized federated learning has been deployed at scale in a wide range of applications, including keyboard next-word prediction~\citep{wang2019federated}, automated speech recognition, and news personalization~\citep{paulik2021federated}.\looseness=-1

In this paper, we focus on personalized federated learning algorithms based on stochastic optimization, which are the most widely used in this context~\citep{mcmahan2017communication,wang2021field}.
Concretely, we consider stochastic optimization problems of the following form:
\begin{equation}
\label{eq:opt}
\min_{w, \th} \left\{f(w, \th) := \frac{1}{N} \sum_{i=1}^N \E_{\xi \sim P_i}  [f_i(w, \theta_i, \xi)] \right\},
\end{equation}
where~$N$ is the number of users, $w$ a global parameter, $\th = (\theta_1, \ldots, \theta_N)$ a set of local parameters, and $P_i$ denotes the sample distribution for user~$i$.
Similar objectives have been considered in other works~\citep{agarwal2020federated,hanzely2021personalized}.
The personalization problem raises the question of how much learning should happen at the local vs global level: if the local models are expressive enough, local learning alone should suffice to learn good models with enough samples per user, yet if the data has some shared structure across users, a global model may help us learn more efficiently.

Although federated learning algorithms do not store user data in the central server, private user information may still be leaked in the resulting models, unless appropriate mechanisms are applied to guarantee privacy.
In this work, we consider the notion of \emph{user-level (joint) differential privacy} (DP) \cite{dwork2006calibrating, panprivacy, KPRU}, which ensures an adversary cannot reliably detect the presence or absence of all the data associated with a single user based on the output information. Such a notion has been used successfully in federated learning problems, with practical optimization algorithms that lead to useful privacy guarantees~\citep{mcmahan2018learning,hu2021private}.

Unfortunately, assuring such privacy may come at a cost to accuracy (see, for example, the lower bounds of~\citealp{bassily2014private}). The key question we address is:

{\bf Can we leverage personalization to improve privacy--accuracy tradeoffs in federated learning?}

The insight motivating our work is that in personalized federated learning, it is only the \emph{global} portion of the optimization that needs to experience this drop in accuracy.   A  user's local model can compensate for the privacy-guaranteeing accuracy limitations of the global model.
We formalize this by considering algorithms with a \emph{personalization parameter}~$\alpha$ that may vary the level of personalization from local learning ($\alpha = 0$) to global learning ($\alpha = \infty$).
We then show generalization bounds on the objective~\ref{eq:opt} of the form
\begin{equation}
\label{eq:generic_bound}
f(w_n, \theta_{1:N,n}) - f^* \leq C_\text{stat}(\alpha, n, N) + C_\text{priv}(\alpha, \epsilon, N),
\end{equation}
where~$f^* = \min_{w, \th} f(w, \th)$ is the optimal risk, $n$~is the number of observed samples per user and $\epsilon$ is the DP privacy parameter.
Here, we expect the second term to vanish when~$\alpha \to 0$, as local learning does not suffer from privacy. Crucially, this privacy cost does not depend on the number of samples per user~$n$, while the first term, which captures statistical efficiency, generally decreases with~$n$ for any~$\alpha$. This emphasizes how adjusting the level of personalization through~$\alpha$ can help improve generalization by adjusting the trade-off between these two terms.

Concretely, we provide precise guarantees of this form for simple federated private stochastic gradient algorithms, where the number of iterations corresponds to the number of samples per user~$n$, and the personalization parameter~$\alpha$ is given by the relative step-size between global and local updates.
In particular, we show that~$\alpha$ affects the complexity of learning by changing the geometry of the optimization: in problems that benefit from global models, small~$\alpha$ makes learning more difficult but reduces the cost of privacy.
We complement our theoretical results with experiments on synthetic and real-world federated learning datasets, which illustrate how varying the step-size ratio leads to improved trade-offs between accuracy and privacy.

\section{Related Work}

\paragraph{Model personalizaton.} There are a variety of approaches for personalization in federated learning. In \emph{local fine-tuning}, a global model is learnt by federated learning and then used as a warm start for on-device learning from the cache of local data~\citep{wang2019federated,paulik2021federated}. This approach can be augmented with federated learning of hyperparameters~\citep{wang2019federated,jiang2019improving,khodak2019adaptive} to obtain federated learning variants of meta-learning approaches like MAML~\citep{finn2017modelagnostic} and Reptile~\citep{nichol2018firstorder}.

Another approach is to view personalization as a \emph{multi-task learning problem}, and learn task-specific models with a regularization that forces parameters for similar tasks to be close, \eg,
in a Euclidean norm~\citep{vanhaesebrock2017decentralized,smith2017federated,arivazhagan2019federated,mansour2020approaches,dinh2020personalized,shen2020federated,huang2021personalized,marfoq2021federated,singhal2021federated}. Task similarity is typically expressed as a weighted undirected graph, a matrix, or a clustering. A special case of the multi-task approach is to learn a global model in addition to local models, which are regularized to be close to the global model~\citep{mansour2020approaches,deng2020adaptive,hanzely2020federated,hanzely2021personalized,marfoq2021federated}.
This is closest to the approach here, which also separates global and local parameters. However, the approach here is more general, because we allow a broader range of modeling relationships between the global and local parameters, similar to federated residual learning~\citep{agarwal2020federated}.
The statistical aspects of such personalization models were studied by~\citet{mansour2020approaches,agarwal2020federated}, who in particular provide generalization guarantees for additive personalization models similar to ours.
However, compared to the present paper, these works do not provide privacy guarantees nor study the effect of varying the level of personalization.\looseness=-1

\paragraph{Privacy.}
The results here advance a recent line of work that provides formal user-level privacy guarantees for model personalizaton in federated learning. Similar to the prior work of \citet{jain2021differentially} and \citet{hu2021private}, we adopt the privacy formulation of \emph{joint differential privacy}~\citep[JDP, ][]{KPRU}, a variation of DP that is more suitable for the problem of model personalization than standard DP. \citet{jain2021differentially} provides private algorithms that first learn a shared linear representation for all users, and allow each user to learn their local linear regression model over the lower-dimensional representation. This leads to a factorized model of personalization, which is different than ours and not handled by our theoretical assumptions due to non-convexity. \citet{hu2021private} provides a private personalizaton algorithm through the mean-regularized multi-task learning objective without establishing its statistical rates. In contrast, we consider more general personalization schemes and provide statistical guarantees.
\looseness=-1

The results here are also related to other work in DP with a similar motivation to model personalization. \citet{li2020differentially} studies meta-learning under DP. Their framework does not cover model personalization with a separate model for each user. \citet{noble2021differentially} studies federated learning with DP with heterogeneous data across nodes, but they do not support personalization. \citet{bellet2018personalized} studies fully decentralized DP algorithms for collaborative learning over a network instead of federated learning.

Finally, the notion of user-level privacy has also been adopted in prior works \citep{mcmahan2018learning,levy2021learning}, but these do not consider model personalization.

\section{Preliminaries}
\label{sec:preliminaries}

In this section, we introduce the problem of personalized federated optimization, as well as the notion of user-level privacy that we consider.

\subsection{Problem setting}
\label{sub:prelim_setting}

We consider stochastic optimization problems of the form
\begin{equation*}
\min_{w, \th} \left\{f(w, \th) := \frac{1}{N} \sum_{i=1}^N f_i(w, \theta_i) \right\},
\end{equation*}
where~$N$ is the number of users, $w \in \R^{\dw}$ a global parameter, $\th = (\theta_1, \ldots, \theta_N) \in (\R^{\dth})^N$ a set of local parameters, and~$f_i(w, \theta_i) := \E_{\xi\sim P_i}[f_i(w, \theta_i, \xi)]$ is the expected risk of user~$i$, with random samples~$\xi$ drawn from an unknown user-specific distribution~$P_i$.

While our algorithms may be run on arbitrary differentiable models, our analysis focuses on the convex setting, where~$f_i(w, \theta_i, \xi_i)$ is jointly convex in~$(w, \theta_i)$ for all~$\xi_i$.

\paragraph{Additive model.}
An important special case is the additive model for supervised learning, where $\dw=\dth=d$, and
\begin{equation}
\label{eq:additive_model}
f_i(w, \theta_i, (x, y)) = \ell(y, (w + \theta_i)^\top x),
\end{equation}
where~$\ell$ is a loss function and~$\xi = (x, y)$ is a training sample.\looseness=-1

As a running example, consider a movie recommendation app, which seeks to predict how each user $i$ will rate any given movie.
In this case, $x$ corresponds to the features describing a movie (e.g., its genre, popularity, length, actors, etc.), and $y$ is the user's rating of that movie.
Following the federated protocol, the data about user activity stays on the device and only model parameters can be communicated to the server. In this case, only the information about the global parameter $w$ is communicated.\looseness=-1

The additive model makes the optimization problem~\eqref{eq:opt} underdetermined, with many possible equivalent solutions obtained by adding any vector $v\in\R^d$ to~$w$ and then subtracting $v$ from each~$\theta_i$, effectively ``shifting'' the predictive ability between global and local parameters. This is what allows the optimization algorithm to achieve different tradeoffs between statistical generalization (accuracy) and sharing of information across users (privacy).

To develop the intuition about this tradeoff, first consider the homogeneous scenario, where the optimal parameters~$\theta_i^* \in \arg\min_\theta \E_{(x, y) \sim P_i} [\ell(y, \theta^\top x)]$ for each user are equal ($\theta_1^* = \cdots = \theta_N^*$). There are two extreme approaches: (i) \textbf{local learning}, where only~$\theta_i$ are trained individually for each user, leading to poor sample complexity but perfect privacy (ii) \textbf{global learning}, where only~$w$ is trained, and we benefit from using samples from all users, but need communication with a centralized server and lose privacy.

In the more realistic heterogeneous scenario, when the~$\theta_i^*$ are different but have some shared components, \eg, only some coordinates differ across users, \textbf{joint learning} of~$w$ and~$\th$ can achieve greater accuracy (at the same number of samples) than both local and global learning, by leveraging more samples to estimate the shared components, while benefiting from user-specific data to personalize. Note that in this case, it is possible to further improve privacy, by allowing some shared components to be fitted entirely locally.
Quantifying this improvement is the focus of our work.

In the movie recommendation example, if only global learning is performed, the prediction $w^\top x$ can use overall popularity statistics,
but the resulting prediction is the same for all users, without any personalization. On the other hand, if only local learning is performed, although the system can fully personalize and provide full privacy, the quality of recommendation is limited by the number of movies the user watched before. With joint learning, the system can capture both global trends through $w$ and adapt to each user's preference through $\theta_i$.

\subsection{User-level (joint) differential privacy}
\label{sub:prelim_privacy}

We aim to provide \emph{user-level} privacy \cite{panprivacy}, which ensures that an adverary cannot detect the presence or absence of \emph{all of the data associated with a single user} when given the output of the algorithm. To achieve this goal, we design algorithms using the \emph{billboard model} \cite{HsuHRRW14} of \emph{differential privacy} (DP) \cite{dwork2006calibrating}. Let us first revisit the definition of DP, which informally requires that changing any single user's data cannot change the algorithm's output by much.

\begin{definition}[User-level DP;~\citealp{dwork2006calibrating, panprivacy}]%
\label{def:DP}%
A randomized mechanism $M$ is  $(\epsilon,\delta)$-differential privacy (DP) if for all pairs of data sets $D,D'$ that differ by a single user's data and all events $E$ in the output range,
\begin{align*}
    \Pr[ M(D) \in E ] \leq e^\epsilon \Pr[ M(D') \in E ] + \delta.
\end{align*}
\end{definition}

\paragraph{Billboard model.} In the billboard model, a server computes aggregate information subject to the constraint of DP and shares the information as public messages with all $N$ users. Then, based on the public messages and their own private data, each user computes their own personalized model. The billboard model is particularly compatible with algorithms in the federated setting, where the DP messages are typically noisy summary statistics about the users' local models \cite{jain2021differentially, hu2021private}.
\citet{HsuHRRW14} show that algorithms under the billboard model provide an extremely strong privacy guarantee,  known as \emph{joint differential privacy}~\citep[JDP,][]{KPRU}.

\paragraph{Joint differential privacy (JDP).} Let $D_i$ denote the collection of samples associated with each user $i$. Two datasets $D$ and $D'$ are called $i$-neighbors if they only differ by user $i$'s private data. For any mechanism $M$, we denote $M_{-i}(D)$ as the output information to all other users except user $i$.

\begin{definition}[Joint-Differential Privacy, JDP]
\label{def:jointzCDP}
An algorithm $M$ is $(\epsilon, \delta)$-jointly differentially private, written as $(\epsilon, \delta)$-JDP,  if for all $i$, all $i$-neighboring datasets $D, D'$, and all events $E$ in the output space to all other users except $i$, 
\begin{align*}
        \Pr[ M_{-i}(D)\in E] \leq e^\epsilon \Pr[ M_{-i}(D') \in E] + \delta.
\end{align*}
\end{definition}

In the setting of model personalization, JDP implies that even if all of users except $i$ collude, potentially pooling their private data and local models together, user $i$'s private data are still protected, so long as $i$ does not reveal their own~model.

\section{Main Algorithm and Analysis}
\label{sec:algorithm}

\begin{algorithm}[tb]
   \caption{Personalized-Private-SGD (PPSGD)}
   \label{alg:sgd}
\begin{algorithmic}[1]
   \STATE {\bfseries Input:} $\eta$: step-size, $\alpha$: global/local ratio,\\
          \hphantom{\bfseries Input:} $\sigma_\zeta$: privacy noise level, $C$: clipping parameter.
   \STATE Initialize $w_0 = \theta_0 = 0$.
   \FOR{$t=1$ {\bfseries to} $n$}
   \FOR{all clients $i$ in parallel}
   \STATE Sample data $\xi_{i,t} \sim P_i$
   \STATE Compute $g_{\theta,i}^t = \nabla_{\theta} f_i(w_{t-1}, \theta_{i,t-1}, \xi_{i,t})$\\
   \hphantom{Compute} $g_{w,i}^t = \nabla_w f_i(w_{t-1}, \theta_{i,t-1}, \xi_{i,t})$
   \STATE Update $\theta_{i,t} = \theta_{i,t-1} - \frac{\eta}{N} g_{\theta,i}^t$
   \STATE Clip gradient: $\tilde g_{w,i}^t = g_{w,i}^t / \smash{\max(1, \frac{\|g_{w,i}^t\|}{C})}$
   \STATE Send $\tilde g_{w,i}^t$ to the server
   \ENDFOR
   \STATE Sample $\zeta_t \sim \Ncal(0, \sigma_\zeta^2 I_{\dw})$
   \STATE Update $w_t = w_{t-1} - \alpha \eta( \frac{1}{N} \sum_{i=1}^N \tilde g_{w,i}^t + \zeta_t)$
   \ENDFOR
\end{algorithmic}
\end{algorithm}

Our main algorithm, shown in Algorithm~\ref{alg:sgd}, is a personalized version of distributed SGD~\citep{dekel2012optimal}, with a \emph{personalization parameter}~$\alpha$ that controls the relative step-size between global and local updates.

At each round~$t$, each user~$i$ samples a fresh datapoint~$\xi_{i,t}$ (this could also be a mini-batch), updates its local model~$\theta_i$, and sends the gradient with respect to~$w$ to the central server, which aggregates gradients of all users before updating the global model~$w$.
The total number of rounds~$n$ thus corresponds to the number of samples per user.
In order to guarantee user-level privacy, we clip each user's gradients~$g_{w,i}^t$ before aggregation, and add Gaussian noise, following common practice~\citep{abadi2016deep,mcmahan2018learning,hu2021private, ChenWH20}.
The choice of~$\alpha$ affects the degree by which the algorithm favors local learning over global learning, with $\alpha=0$ forcing local-only~updates.

\paragraph{Privacy analysis.}
We first establish the formal user-level JDP  guarantee of  Algorithm~\ref{alg:sgd}.

\begin{theorem}[Privacy]
\label{prop:privacy}
Suppose we set the noise parameter
\begin{equation}
\label{eq:sigma_zeta}
\sigma_\zeta \geq c\frac{C\sqrt{n \log(1/\delta)}}{N \epsilon},
\end{equation}
for an absolute constant~$c$, then Algorithm~\ref{alg:sgd} satisfies $(\epsilon, \delta)$-JDP in the billboard model.
\end{theorem}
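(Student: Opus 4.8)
The plan is to reduce the JDP guarantee to an ordinary $(\epsilon,\delta)$-DP guarantee on the \emph{public transcript} of the algorithm, and then to establish the latter by combining a per-round sensitivity bound obtained from gradient clipping with an adaptive composition argument over the $n$ rounds. First I would isolate what is actually revealed. The only quantities ever sent to the server and broadcast to all users are the global iterates $w_1,\dots,w_n$; the local parameters $\theta_{i,t}$ are computed on-device from $\xi_{i,t}$, $\theta_{i,t-1}$, and the public $w_{t-1}$, and are never transmitted. Since every user's final model is a randomized post-processing of the public iterates together with that user's own data, the billboard-model result of \citet{HsuHRRW14} reduces the desired $(\epsilon,\delta)$-JDP statement to showing that the map from the dataset to the transcript $w_{1:n}$ is $(\epsilon,\delta)$-DP: any output $M_{-i}(D)$ delivered to users other than $i$ depends on user $i$'s data only through $w_{1:n}$, so DP of $w_{1:n}$ propagates to $M_{-i}$ by post-processing.

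Next I would bound the per-round sensitivity. Fix a user $i$ and two $i$-neighboring datasets, and condition on the public history $w_{1:t-1}$. Producing $w_t$ is equivalent, up to the fixed affine map $u\mapsto w_{t-1}-\alpha\eta u$ (privacy-preserving post-processing), to releasing $\tfrac1N\sum_j \tilde g_{w,j}^t + \zeta_t$. Changing user $i$'s data alters only the single summand $\tilde g_{w,i}^t$, since the other users' gradients depend on their own data and on the public $w_{t-1}$, not on user $i$; and clipping guarantees $\|\tilde g_{w,i}^t\|\le C$ no matter how $\xi_{i,t}$ or the induced local model $\theta_{i,t-1}$ change. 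Hence the $\ell_2$-sensitivity of the released vector is at most $2C/N$, so each round is a Gaussian mechanism with noise level $\sigma_\zeta$ and sensitivity $2C/N$.

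Finally I would compose. Conditioned on the history, each round is a Gaussian mechanism, and the rounds are composed \emph{adaptively} because $w_{t-1}$ enters the round-$t$ gradients. Because each user participates in every round and draws a single fresh sample each time, there is no subsampling amplification to exploit, and the privacy cost is a genuine $n$-fold composition, which is what produces the $\sqrt n$ scaling. I would track this cost with a method that handles adaptivity tightly, e.g.\ zero-concentrated DP: each round contributes $\rho_t=(2C/N)^2/(2\sigma_\zeta^2)$, so the transcript is $n\rho_t$-zCDP, which converts to $(\epsilon,\delta)$-DP via $\epsilon\le n\rho_t+2\sqrt{n\rho_t\log(1/\delta)}$; the moments accountant of \citet{abadi2016deep} yields the same rate. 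Substituting $\sigma_\zeta = c\,C\sqrt{n\log(1/\delta)}/(N\epsilon)$ makes $n\rho_t=\Theta\!\big(\epsilon^2/\log(1/\delta)\big)$, and taking the absolute constant $c$ large enough drives the conversion bound below $\epsilon$, giving $(\epsilon,\delta)$-DP of the transcript and hence $(\epsilon,\delta)$-JDP.

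The main obstacle is the adaptivity of the composition: since each iterate feeds the next round's gradients, the $n$ Gaussian mechanisms cannot be treated as independent, so the argument must invoke an adaptive composition theorem and verify that the $2C/N$ sensitivity bound holds \emph{uniformly over every realization of the public history}. This uniformity is exactly what clipping secures, which is why the per-round bound survives the conditioning step and the composition closes cleanly.
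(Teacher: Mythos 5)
Your proof is correct and follows essentially the same route as the paper's: a reduction via the billboard lemma (Lemma~\ref{lemma:billboard}) of the JDP claim to ordinary DP of the released global iterates, a per-round $\ell_2$-sensitivity bound of order $C/N$ secured by clipping, and an $n$-fold adaptive composition of Gaussian mechanisms producing the $\sqrt{n}$ factor in~\eqref{eq:sigma_zeta}. The only difference is bookkeeping---you account with zCDP where the paper invokes the advanced composition theorem of \citet{dwork2010boosting}, both giving the same rate up to the constant $c$---and your observation that clipping alone guarantees the sensitivity bound uniformly over every realization of the public history (rather than appealing to the bounded-gradient assumption, as the paper's proof does) is, if anything, slightly more careful.
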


We defer the full proof to Appendix~\ref{sub:privacy_proof}.
At a high level, the proof first shows that the aggregate information released by the server (the sequence of global model updates) satisfies $(\epsilon, \delta)$-DP. Since the sequence of global models are sufficient statistics for each user to identify their personalized model $\theta_i$, the JDP guarantee follows from the billboard lemma (see Lemma \ref{lemma:billboard}). 

\paragraph{Generalization analysis.}
Our generalization analysis relies on the following assumptions. We begin with an assumption about minimizers of~$f$, which relies on the following norm for a vector~$z=(w, \th)$ that captures the geometry induced by the personalization parameter~$\alpha$:
\begin{equation}
\label{eq:alpha_norm}
\|z\|^2_\alpha := \frac{1}{\alpha}\|w\|^2 + \|\th\|^2.
\end{equation}
\begin{assumption}[Minimizers]
\label{ass:minimizer}
$f$ admits a minimizer~$z^*$ with finite norm~$\|z^*\|_\alpha$.
\end{assumption}
In the case of local learning ($\alpha = 0$), this implies~$z^*$ must have no global component.
For joint learning, different minimizers might exist, and our bounds scale with the minimal norm~$\|z^*\|_\alpha$ among all such minimizers.

\begin{assumption}[Convexity and smoothness]
\label{ass:conv_smooth}
For all~$i$ and $P_i$-almost every~$\xi$, the function $(w, \theta_i) \mapsto f_i(w, \theta_i, \xi)$ is jointly convex and~$L$-smooth (its gradients are~$L$-Lipschitz).
\end{assumption}

Note that this implies that~$f(z)$ is jointly convex in~$z=(w, \th)$.
If we consider the example~$f_i(w, \theta_i, (x, y)) = \frac{1}{2} (y - (w + \theta_i)^\top x)^2$, and assume~$\|x\| \leq R$ almost surely, it is easy to verify that the assumption holds with~$L = 2R^2$.

We will also make the following boundedness assumption on gradients, which is commonly made in the context of private stochastic optimization~\citep[\eg,][]{bassily2014private,feldman2020private}, and simplifies our analysis by avoiding the need to study the effect of clipping on optimization.
\begin{assumption}[Bounded gradients]
\label{ass:bounded_grad}
For all~$i$, $w$, $\theta_i$, and $P_i$-almost every~$\xi$, we have~$\|\nabla_w f_i(w, \theta_i, \xi)\| \leq G$. 
\end{assumption}
This assumptions avoid the need to clip gradients when~$C \geq G$, and our analysis hereafter assumes~$C=G$.
We note that~$G$ may be large in some cases, growing with the norm of optimal parameters, thus smaller values of~$C$ may often be beneficial in practice for better privacy guarantees.

\paragraph{Gradient variances.}
We consider the following variance quantities, which we assume to be finite, and which are obtained by considering gradients at a given minimizer~$z^*$:
\begin{align*}
    \sigma_{w,i}^2 &= \E_{\xi} \| \nabla_w f_i(w^*, \theta_i^*, \xi) - \nabla_w f_i(w^*, \theta_i^*) \|^2 \\
    \sigma_{\theta,i}^2 &= \E_{\xi} \|\nabla_\theta f_i(w^*, \theta_i^*, \xi)\|^2 \\
    \bar \sigma_w^2 &= \frac{1}{N} \sum_i \sigma_{w,i}^2, \qquad \bar \sigma_\theta^2 = \frac{1}{N} \sum_i \sigma_{\theta,i}^2.
\end{align*}
As an example, note that in a simple additive model~\eqref{eq:additive_model} with squared loss and additive label noise of variance~$\tau^2$, we have~$\sigma_{w,i}^2 = \sigma_{\theta,i}^2 =\tau^2 \Tr(\E_{P_i}[x x^\top])$, recovering standard statistical quantities. If the algorithm relies on mini-batches of size~$m$ instead of single datapoints, we may then replace~$\sigma_{w,i}^2$ and~$\sigma_{\theta,i}^2$ by~$\sigma_{w,i}^2/m$ and~$\sigma_{\theta,i}^2/m$, respectively, by averaging gradients over the mini-batches.

We now provide our main result on the convergence rate of Algorithm~\ref{alg:sgd}, which also yields a generalization bound on the excess risk. The proof is in Appendix~\ref{sub:convergence_proof}.

\begin{theorem}[Generalization]
\label{thm:convergence}
Under Assumptions~\ref{ass:minimizer},~\ref{ass:conv_smooth}, and~\ref{ass:bounded_grad}, let~$z^* = (w^*, \th^*)$ be any minimizer of~$f$,~$L_\alpha := L\max(\alpha, \frac{1}{N})$, and
\begin{equation}
\label{eq:sigma_tot}
    \sigma_{tot,\alpha}^2 := \frac{\alpha \bar \sigma_w^2 + \bar \sigma_\theta^2}{N} + \alpha d_w \sigma_\zeta^2.
\end{equation}
With~$\eta = \min \{\frac{1}{4 L_\alpha}, \frac{\|z^*\|_\alpha}{\sqrt{n} \sigma_{tot,\alpha}}\}$ and~$C=G$, Algorithm~\ref{alg:sgd} satisfies
\begin{align}
    \E[&f(\bar z_n) - f(z^*)] \leq \frac{4 L_\alpha \|z^*\|_\alpha^2}{n} + 3 \frac{\sigma_{tot,\alpha}\|z^*\|_\alpha}{\sqrt{n}},
\end{align}
with~$\bar z_n = \frac{1}{n} \sum_{t=0}^{n-1} z_t$.
In particular, with~$\sigma_\zeta$ as in~\eqref{eq:sigma_zeta}, hiding absolute constants, we have the following generalization bound:
\begin{align}
    &\E[f(\bar z_n) - f(z^*)] \lesssim \\
    &\frac{ L_\alpha \|z^*\|_\alpha^2}{n} +  \|z^*\|_\alpha \sqrt{ \frac{ \alpha \bar \sigma_w^2 + \bar \sigma_\theta^2}{N n}} + \|z^*\|_\alpha \sqrt{\frac{ \alpha \dw G^2 \log(\frac{1}{\delta})}{N^2 \epsilon^2}} \label{eq:excess_risk}
\end{align}
\end{theorem}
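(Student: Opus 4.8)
The plan is to recognize Algorithm~\ref{alg:sgd} (with $C=G$, so clipping is inactive by Assumption~\ref{ass:bounded_grad}) as plain SGD on $f$, but run in the non-Euclidean geometry induced by $\|\cdot\|_\alpha$. Writing $z=(w,\th)$ and $H=\diag(\frac1\alpha I_{\dw}, I_{\dth N})$ so that $\|z\|_\alpha^2 = z^\top H z$ and $\langle u,v\rangle_\alpha := u^\top H v$, the two update rules combine into the single preconditioned step
\begin{equation*}
z_t = z_{t-1} - \eta\, H^{-1}\big(\hat g_t + \nu_t\big),
\end{equation*}
where $H^{-1} = \diag(\alpha I_{\dw}, I_{\dth N})$, the vector $\hat g_t = \nabla F_t(z_{t-1})$ for $F_t(z):=\frac1N\sum_i f_i(w,\theta_i,\xi_{i,t})$ stacks $\frac1N\sum_i \nabla_w f_i(z_{t-1},\xi_{i,t})$ with the $\frac1N\nabla_\theta f_i(z_{t-1},\xi_{i,t})$, and $\nu_t=(\zeta_t,0)$ carries the privacy noise. (The $w$-step has effective step-size $\alpha\eta$ and the $\theta$-steps $\eta$, matching $\eta H^{-1}$.) Conditioned on the past, $\hat g_t$ is unbiased for $\nabla f(z_{t-1})$ and $\nu_t$ is independent and zero-mean. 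The relevant dual norm is $\|v\|_{\alpha,*}^2 = v^\top H^{-1} v = \alpha\|v_w\|^2 + \|v_\theta\|^2$, and the identities $\langle H^{-1}v, u\rangle_\alpha = \langle v,u\rangle$ and $\|H^{-1}v\|_\alpha^2 = \|v\|_{\alpha,*}^2$ let the usual Euclidean SGD manipulations go through verbatim in this geometry.

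Two geometric facts drive everything. First, averaging the per-user Euclidean $L$-smoothness of Assumption~\ref{ass:conv_smooth} gives $f(z') - f(z) - \langle\nabla f(z), z'-z\rangle \le \tfrac{L}{2}(\|w'-w\|^2 + \tfrac1N\|\th'-\th\|^2)$, and bounding the right-hand side by $\tfrac{L_\alpha}{2}\|z'-z\|_\alpha^2$ forces exactly $L_\alpha = L\max(\alpha,\tfrac1N)$; the same bound holds for each $F_t$. Second, since $z^*$ minimizes $f$ we have $\nabla f(z^*)=0$ and $\nabla_\theta f_i(w^*,\theta_i^*)=0$ for every $i$, so using independence of the $\xi_{i,t}$ across users the dual-norm second moment of the stochastic gradient at $z^*$ collapses to
\begin{equation*}
\E\|\hat g_t(z^*)\|_{\alpha,*}^2 = \frac{\alpha\bar\sigma_w^2 + \bar\sigma_\theta^2}{N},
\end{equation*}
which, together with $\E\|\nu_t\|_{\alpha,*}^2 = \alpha\dw\sigma_\zeta^2$, is precisely $\sigma_{tot,\alpha}^2$.

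The core recursion follows by expanding $\|z_t - z^*\|_\alpha^2$, applying the two identities, and using convexity on the cross term:
\begin{equation*}
\E\|z_t - z^*\|_\alpha^2 \le \E\|z_{t-1}-z^*\|_\alpha^2 - 2\eta\,\E[f(z_{t-1}) - f(z^*)] + \eta^2\,\E\|\hat g_t + \nu_t\|_{\alpha,*}^2.
\end{equation*}
\textbf{The main obstacle is the second-moment term}, since $\bar\sigma_w^2,\bar\sigma_\theta^2$ are defined at $z^*$ rather than at the iterate $z_{t-1}$. I would transfer the variance via co-coercivity of the $L_\alpha$-smooth convex $F_t$ in the $\alpha$-geometry: $\|\nabla F_t(z_{t-1}) - \nabla F_t(z^*)\|_{\alpha,*}^2 \le 2L_\alpha(F_t(z_{t-1}) - F_t(z^*) - \langle\nabla F_t(z^*), z_{t-1}-z^*\rangle)$. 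Taking expectations (the inner product vanishes since $z_{t-1}\perp\xi_t$ and $\E\nabla F_t(z^*)=\nabla f(z^*)=0$) and combining with the previous paragraph yields $\E\|\hat g_t+\nu_t\|_{\alpha,*}^2 \le 4L_\alpha\,\E[f(z_{t-1})-f(z^*)] + 2\sigma_{tot,\alpha}^2$.

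Plugging this in, the choice $\eta\le\frac{1}{4L_\alpha}$ makes $1-2\eta L_\alpha\ge\frac12$, so the negative drift dominates; telescoping over $t=1,\dots,n$ (with $z_0=0$, hence $\|z_0-z^*\|_\alpha=\|z^*\|_\alpha$) and Jensen's inequality for $\bar z_n$ give
\begin{equation*}
\E[f(\bar z_n) - f(z^*)] \le \frac{\|z^*\|_\alpha^2}{\eta n} + 2\eta\,\sigma_{tot,\alpha}^2.
\end{equation*}
Taking $\eta = \min\{\frac{1}{4L_\alpha}, \frac{\|z^*\|_\alpha}{\sqrt n\,\sigma_{tot,\alpha}}\}$ balances the two terms, and a short case analysis on which argument of the $\min$ is active recovers the first displayed bound with its exact constants $4$ and $3$. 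Finally, substituting $\sigma_\zeta$ from~\eqref{eq:sigma_zeta} with $C=G$ turns the noise contribution $\alpha\dw\sigma_\zeta^2$ into $\frac{\alpha\dw G^2 n\log(1/\delta)}{N^2\epsilon^2}$; the extra factor $n$ cancels against the $1/\sqrt n$ prefactor, so the privacy term $\|z^*\|_\alpha\sqrt{\alpha\dw G^2\log(1/\delta)/(N^2\epsilon^2)}$ is independent of $n$, giving~\eqref{eq:excess_risk}.
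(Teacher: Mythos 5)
Your proposal is correct and takes essentially the same route as the paper's proof: both recast the updates as preconditioned SGD $z_t = z_{t-1} - \eta H^{-1} g_t$ in the $\|\cdot\|_\alpha$ geometry, transfer the gradient second moment from the iterate to the minimizer via convexity plus $L$-smoothness (your co-coercivity step is exactly the paper's Lemma~\ref{lem:smoothness}, with the same $L_\alpha = L\max(\alpha,\tfrac1N)$), compute $\E\|g_t^*\|_{H^{-1}}^2 = \sigma_{tot,\alpha}^2$, and telescope the resulting recursion with the same step-size balancing. Your only deviations are cosmetic --- you establish $L_\alpha$-smoothness of the sampled objective in the $\alpha$-geometry and apply co-coercivity there directly, where the paper applies Euclidean co-coercivity per user followed by Cauchy--Schwarz, and you keep the privacy noise $\nu_t$ separate rather than folding $\zeta_t$ into both $g_t$ and $g_t^*$ --- and both organizations yield identical inequalities.
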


The generalization bound takes the form~\eqref{eq:generic_bound}, with a cost of privacy that does not depend on the number of samples per user~$n$.
As is common in the analysis of SGD, our bound displays a bias term that decays as~$1/n$, and a variance term decaying as~$1/\sqrt{n}$, controlled by the gradient variance~$\sigma_{tot,\alpha}^2$.
Ignoring the privacy noise, given that we use~$N$ samples at each round, this variance scales as~$1/N$, leading to an overall asymptotic rate of~$1/\sqrt{Nn}$, where~$Nn$ is the total sample size.
Using minibatches of size~$m$ would further improve this to~$1/\sqrt{Nnm}$, where the total number of samples is now~$Nnm$.

\paragraph{Local vs global learning.}
Note that when multiple minimizers~$z^*$ exist, one may choose the one with minimal norm~$\|z^*\|_\alpha$.
If we consider the additive model~\eqref{eq:additive_model}, we may always consider a minimizer of the form~$z^* = (0, \theta^*_{1:N})$. Then~$\|z^*\|_\alpha$ is bounded even for~$\alpha=0$ (\textbf{local learning}), which yields an excess risk of~order
\[
 \frac{L \sum_i \|\theta_i^*\|^2}{Nn} + \sqrt{\frac{\bar \sigma_\theta^2 \sum_i \|\theta_i^*\|^2}{Nn}}.
\]
In particular, there is no cost for privacy.
If we further assume~$\theta_i^* = v^*$ for all~$i$, this becomes
\[
 \frac{L \|v^*\|^2}{n} + \|v^*\| \sqrt{\frac{\bar \sigma_\theta^2 }{n}}.
\]
We see slow convergence that does not improve with~$N$, which is expected given that no information is shared across users.
In this setting, we may instead consider a different minimizer~$z^* = (v^*, 0)$, for which taking the limit~$\alpha \to \infty$ (\textbf{global learning}) yields the excess risk of order
\[
\frac{L \|v^*\|^2}{n} +  \|v^*\| \sqrt{\frac{\bar \sigma_w^2}{Nn}} + \|v^*\| \sqrt{\frac{\dw G^2 \log(\frac{1}{\delta})}{N^2 \epsilon^2}}.
\]
The variance term now decays faster for large~$N$, but has an additional privacy term, which decreases quickly with~$N$ but does not improve with large~$n$ and may be quite large, particularly in high dimensions, consistent with lower bounds for differentially private optimization~\cite{bassily2014private}.\looseness=-1

\paragraph{Characterizing benefits of personalization.}
Depending on the number of samples per user $n$, it may be helpful to choose varying levels of personalization~$\alpha$ to adjust this tradeoff, from large~$\alpha$ for small~$n$ to small~$\alpha$ for large~$n$ when the privacy cost dominates the bound.
The next lemma quantifies this tradeoff for the additive model with homogeneous users (\ie, $\theta_1^* = \cdots = \theta_N^*$):
\begin{lemma}[Threshold on~$n$ for personalization benefits]\label{lem: effect alpha}
Assume~$(v, 0)$ is a minimizer of~$f$, with~$f$ an additive model of the form~\eqref{eq:additive_model}.
For any~$\alpha$, the minimizer with minimal $\alpha$-norm is given by~$(w, \th)$ with~$w = \frac{\alpha N}{\alpha N + 1}v$ and~$\theta_i = \frac{1}{\alpha N + 1} v$.
Assuming~$\bar \sigma_w = \bar \sigma_\theta = \sigma$, the variance term in the excess risk~\eqref{eq:excess_risk} with~$n$ samples per user is monotonic in~$\alpha$, taking the form:
\[
\|v\| \sqrt{\frac{N}{\alpha N + 1} \left(\frac{(\alpha + 1)\sigma^2}{Nn} + \frac{\alpha \dw G^2 \log(1/\delta)}{N^2 \epsilon^2}\right)}.
\]
This is non-decreasing with~$\alpha$ if and only if
\begin{equation}
\label{eq:critical_T}
n \gtrsim \frac{N(N - 1) \sigma^2 \epsilon^2}{\dw G^2 \log(1/\delta)}.
\end{equation}
\end{lemma}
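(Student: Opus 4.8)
The plan is to first pin down the minimal-$\alpha$-norm minimizer, then substitute its norm into \eqref{eq:excess_risk} and reduce the monotonicity question to single-variable calculus. First I would characterize the minimizer set. Because the additive model \eqref{eq:additive_model} depends on $(w,\theta_i)$ only through the effective parameter $w+\theta_i$, and we assume $(v,0)$ is a minimizer with homogeneous users (so $v$ is the common optimal effective parameter), every minimizer reached through the shift invariance $w\mapsto w+s,\ \theta_i\mapsto\theta_i-s$ satisfies $w+\theta_i=v$ for all $i$, forcing $\theta_i=v-w$. Minimizing $\|z\|_\alpha^2=\frac1\alpha\|w\|^2+\sum_i\|\theta_i\|^2$ over this affine family therefore reduces to the one-variable strictly convex problem $\min_w \frac1\alpha\|w\|^2+N\|v-w\|^2$. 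Setting the gradient to zero gives $(\frac1\alpha+N)w=Nv$, i.e.\ $w=\frac{\alpha N}{\alpha N+1}v$ and $\theta_i=v-w=\frac{1}{\alpha N+1}v$, which is the claimed minimizer, unique by strict convexity.

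Next I would evaluate $\|z^*\|_\alpha$ at this point. Direct substitution gives $\frac1\alpha\|w\|^2=\frac{\alpha N^2}{(\alpha N+1)^2}\|v\|^2$ and $\sum_i\|\theta_i\|^2=\frac{N}{(\alpha N+1)^2}\|v\|^2$, which telescope to $\|z^*\|_\alpha^2=\frac{N(\alpha N+1)}{(\alpha N+1)^2}\|v\|^2=\frac{N}{\alpha N+1}\|v\|^2$. Plugging this into the two $1/\sqrt{\,\cdot\,}$ terms of \eqref{eq:excess_risk}, using $\bar\sigma_w=\bar\sigma_\theta=\sigma$ and combining them under a single square root (valid up to an absolute constant since $\sqrt a+\sqrt b\asymp\sqrt{a+b}$, consistent with the $\gtrsim$ in the statement), recovers exactly the displayed variance term $\|v\|\sqrt{\frac{N}{\alpha N+1}\big(\frac{(\alpha+1)\sigma^2}{Nn}+\frac{\alpha\dw G^2\log(1/\delta)}{N^2\epsilon^2}\big)}$.

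For the monotonicity claim, let $h(\alpha)$ denote the quantity under the root and write $A:=\frac{\sigma^2}{Nn}$, $B:=\frac{\dw G^2\log(1/\delta)}{N^2\epsilon^2}$, so that $h(\alpha)=N\frac{(A+B)\alpha+A}{N\alpha+1}$. Differentiating by the quotient rule, the numerator is $N\big((A+B)(N\alpha+1)-N((A+B)\alpha+A)\big)=N\big(B-(N-1)A\big)$, in which the $\alpha$-dependent terms cancel. Hence $h'(\alpha)$ has a constant sign for all $\alpha>0$, so $h$ and therefore the variance term is monotone, and non-decreasing iff $B\ge(N-1)A$. Rearranging $\frac{\dw G^2\log(1/\delta)}{N^2\epsilon^2}\ge\frac{(N-1)\sigma^2}{Nn}$ for $n$ yields $n\ge\frac{N(N-1)\sigma^2\epsilon^2}{\dw G^2\log(1/\delta)}$, which is \eqref{eq:critical_T}.

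I expect the computation to be routine throughout; the two steps that need care are the reduction in the first paragraph, where homogeneity and the shift invariance are what justify restricting to the family $\theta_i=v-w$ with all $\theta_i$ equal, and the cancellation in the last paragraph. The vanishing of the $\alpha$-dependence in the numerator of $h'$ is precisely what turns ``monotonic'' into a clean threshold dichotomy, so this cancellation is the conceptual crux rather than a genuine obstacle.
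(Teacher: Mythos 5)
Your proof is correct and takes essentially the same route as the paper's: reduce to the constrained problem $\min_w \frac{1}{\alpha}\|w\|^2 + N\|v-w\|^2$ to get $w=\frac{\alpha N}{\alpha N+1}v$, $\theta_i=\frac{1}{\alpha N+1}v$ and $\|z^*\|_\alpha^2=\frac{N}{\alpha N+1}\|v\|^2$, then note that the derivative of the ratio under the square root has an $\alpha$-independent sign, which gives the clean threshold dichotomy. If anything, your bookkeeping with $B=\frac{\dw G^2\log(1/\delta)}{N^2\epsilon^2}$ is slightly cleaner, since it reproduces the $\epsilon^2$ dependence stated in \eqref{eq:critical_T}, whereas the paper's appendix computation carries a minor power-of-$\epsilon$ typo in its constant $b$.
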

Thus, if we ignore the bias term in~\eqref{eq:excess_risk}, this suggests that when~\eqref{eq:critical_T} holds, using local learning (smaller~$\alpha$) should help improve generalization, while if the reverse inequality holds, global learning should be preferred.
Eq.~\eqref{eq:critical_T} suggests that this threshold on the user sample size~$n$ scales quadratically with the number of users~$N$, linearly with the privacy level~$\epsilon$, and inversely with the dimension~$\dw$.
When the optimal user parameters are different, this transition would likely happen for smaller~$n$, as we expect local models to be useful even at small sample sizes regardless of privacy.

\paragraph{Improving the bias term.}
We remark that the bias term decreases as~$1/n$ in both local and global learning scenarios described above, and does not improve with the number of users. As in standard SGD~\citep[\eg,][]{dekel2012optimal}, this term decreases with the number of rounds, and may thus decrease more quickly with the number of samples if more communication rounds are performed for the same total number of samples. In Section~\ref{sec:extensions}, we show that sampling users at each round can help improve this term.

\paragraph{Comparison to Local SGD.}
A common approach for federated optimization is the local SGD (or federated averaging) algorithm~\citep{mcmahan2017communication}, which performs multiple local steps before communicating with the server.
This is in contrast to our method, which more closely resembles mini-batch SGD.
We note that despite its practical success, 
the known theoretical guarantees of local SGD typically do not improve on mini-batch SGD except for specific settings~\citep{woodworth2020local}.
Our study therefore focuses on understanding personalization in the SGD setting, but we note that extending our analysis to local SGD is an interesting direction for future work.

\section{Heterogeneous Sample Sizes, User Sampling}
\label{sec:extensions}

\begin{algorithm}[tb]
   \caption{PPSGD with client sampling}
   \label{alg:sgd_ext}
\begin{algorithmic}[1]
   \STATE {\bfseries Input:} $q$: client sampling probability,\\
   \hphantom{\bfseries Input:} $m_i$: minibatch sizes, $\eta$: step size,\\
   \hphantom{\bfseries Input:} $\alpha$: global/local ratio,
                               $\sigma_\zeta$: privacy noise level,\\
   \hphantom{\bfseries Input:} $C$: clipping parameter.
   \STATE Initialize $w_0 = \theta_0 = 0$.
   \FOR{$t=1$ {\bfseries to} $T$}
  \STATE Sample~$b_{i,t} \sim Ber(q)$
   \FOR{all clients $i$ with~$b_{i,t} = 1$ in parallel}
   \STATE Sample a minibatch $\{\xi_{i,t}^{(k)}\}_{k=1}^{m_i} \sim P_i^{\otimes m_i}$
   \STATE Compute $g_{\theta,i}^t = \sum_{k=1}^{m_i} \nabla_{\theta} f_i(w_{t-1}, \theta_{i,t-1}, \xi_{i,t}^{(k)})$\\
   \hphantom{Compute} $g_{w,i}^t = \sum_{k=1}^{m_i} \nabla_w f_i(w_{t-1}, \theta_{i,t-1}, \xi_{i,t}^{(k)})$
   \STATE Update
   $\theta_{i,t} = \theta_{i,t-1} - \frac{\eta}{qM} g_{\theta,i}^t$
   \STATE Clip gradient $\tilde g_{w,i}^t = g_{w,i}^t / \smash{\max(1, \frac{\|g_{w,i}^t\|}{C})}$
   \STATE Send $\tilde g_{w,i}^t$ to the server
   \ENDFOR
   \STATE Sample $\zeta_t \sim \Ncal(0, \sigma_\zeta^2 I_{\dw})$
   \STATE Update $w_t = w_{t-1} - \alpha \eta (\frac{1}{qM}\sum_{i:b_{i,t}=1} \tilde g_{w,i}^t + \zeta_t)$
   \ENDFOR
\end{algorithmic}
\end{algorithm}

In this section, we study a variant of Algorithm~\ref{alg:sgd} where we sample some of the users uniformly at each round, with possibly different minibatch sizes for each user.
This reflects the fact that users may have different amounts of data, and that not all clients may be available at each round.

Let~$q$ be the probability of sampling any given user at each round, $m_i \geq 1$ be the mini-batch size for user~$i$, and define~$M = \sum_i m_i$ and~$m_{\max} = \max_i m_i$.
Algorithm~\ref{alg:sgd_ext} then optimizes the objective
\begin{equation}
\label{eq:fq}
    f^m(w, \th) := \sum_{i=1}^N\frac{m_i}{M} f_i(w, \theta_i).
\end{equation}
Namely, the algorithm optimizes each user's performance
proportionally to their amount of samples.
We denote the number of rounds by~$T$ here, since it no longer corresponds to the total number of samples per user~$n$.
Note that the expected total number of samples observed for user~$i$ after~$T$ rounds is now given by~$n_i = T q m_i$.

The following result provides a privacy guarantee for Algorithm~\ref{alg:sgd_ext}.
Compared to Theorem~\ref{prop:privacy}, it also leverages a standard argument of privacy amplification by subsampling.
\begin{theorem}[Privacy with client sampling]
\label{prop:privacy_sampling}
There exist absolute constants~$c_1$, $c_2$ such that for any~$\epsilon < c_1 q^2 T$, if we take
\begin{equation}
\label{eq:sigma_zeta_sampling}
\sigma_\zeta \geq c_2 \frac{ C \sqrt{T \log(1/\delta)}}{M \epsilon},
\end{equation}
then Algorithm~\ref{alg:sgd_ext} satisfies $(\epsilon, \delta)$-JDP.
\end{theorem}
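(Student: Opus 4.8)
The plan is to reduce the privacy analysis for Algorithm~\ref{alg:sgd_ext} to the already-established guarantee of Theorem~\ref{prop:privacy}, augmented by a privacy-amplification-by-subsampling argument and the billboard lemma. As in the non-sampled case, the only quantity released to the server across the $T$ rounds is the sequence of noisy aggregated global-gradient updates; the per-user local updates to $\theta_{i,t}$ depend only on user $i$'s own data and the public messages, so once we show the released sequence is $(\epsilon,\delta)$-DP, the $(\epsilon,\delta)$-JDP claim follows from the billboard lemma (Lemma~\ref{lemma:billboard}) exactly as in the proof of Theorem~\ref{prop:privacy}.

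First I would fix a user $i$ and analyze the sensitivity of a single round's aggregate $\frac{1}{qM}\sum_{j:b_{j,t}=1}\tilde g_{w,j}^t$ to changing user $i$'s data. Because each per-user clipped gradient satisfies $\|\tilde g_{w,j}^t\|\le C$ (the clipping in line~9 enforces this even with minibatches), replacing user $i$'s contribution changes the summed numerator by at most $2C$ when $i$ is sampled, and not at all when $b_{i,t}=0$. Combined with the Gaussian noise $\zeta_t\sim\Ncal(0,\sigma_\zeta^2 I_{\dw})$, one round conditioned on $i$ being present is a Gaussian mechanism whose base privacy loss is governed by the ratio of sensitivity $2C/(qM)$ to noise scale $\sigma_\zeta$. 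The key refinement over Theorem~\ref{prop:privacy} is that user $i$ participates only when $b_{i,t}=1$, which happens with probability $q$: I would invoke amplification by subsampling (e.g., the subsampled Gaussian mechanism / moments-accountant or Rényi-DP subsampling bounds, as in Abadi et al.) to argue that each round costs privacy of effective order $q\cdot(C/(qM\sigma_\zeta))=C/(M\sigma_\zeta)$ per step in the appropriate divergence.

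Next I would compose over the $T$ rounds. Using advanced/moments-based composition for the $T$ subsampled Gaussian mechanisms, the total privacy loss scales like $\sqrt{T}\cdot\frac{C}{M\sigma_\zeta}$ up to logarithmic factors in $1/\delta$; setting this $\le\epsilon$ and solving for $\sigma_\zeta$ yields precisely the stated threshold $\sigma_\zeta\ge c_2\frac{C\sqrt{T\log(1/\delta)}}{M\epsilon}$. The side condition $\epsilon<c_1 q^2 T$ is exactly the regime-of-validity constraint under which the standard subsampled-Gaussian moments bound holds and the amplification factor is the clean $O(q)$ rather than a degenerate one; I would state it as inherited from the accountant result rather than rederiving it.

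The main obstacle I anticipate is making the subsampling amplification rigorous in the right composition framework: the naive per-round Gaussian-mechanism bound composed via strong composition would give a worse dependence, so the clean $1/M$ (rather than $1/(qM)$) scaling in~\eqref{eq:sigma_zeta_sampling} really requires the moments accountant / Rényi-DP treatment of the subsampled Gaussian, and the constant $c_1$ controlling the validity range $\epsilon<c_1 q^2 T$ comes from that analysis. I would therefore cite the appropriate amplification-by-subsampling and composition results and verify only that the per-round sensitivity is $2C$ (independent of the minibatch sizes $m_i$, since clipping caps each user's contribution before aggregation), deferring the full accountant bookkeeping to the appendix as was done for Theorem~\ref{prop:privacy}.
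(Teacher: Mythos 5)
Your proposal is correct and follows essentially the same route as the paper's proof: the paper likewise reduces JDP to showing the sequence of global updates is $(\epsilon,\delta)$-DP, notes that clipping caps each user's per-round contribution at $C$ regardless of $m_i$, and then invokes Theorem~1 of \citet{abadi2016deep} (the moments-accountant treatment of the subsampled Gaussian mechanism, after rewriting the update so the total noise scale $qM\sigma_\zeta$ plays the role of $\sigma C$), which packages both the amplification-by-subsampling and the $\sqrt{T}$-composition and supplies the validity condition $\epsilon < c_1 q^2 T$, before concluding via the billboard lemma (Lemma~\ref{lemma:billboard}). Your observation that advanced composition alone would give a worse bound, so the clean $1/M$ scaling genuinely requires the accountant, matches the paper's own remark.
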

Note that compared to the uniform sampling case, the clipping parameter~$C$ may need to be larger since we are clipping the sum of up to~$m_{\max}$ gradients.
In our generalization analysis hereafter, we thus assume~$C=G m_{\max}$, so that clipping is not needed under Assumption~\ref{ass:bounded_grad}.
We now give an optimization and generalization guarantee.
\begin{theorem}[Generalization with client sampling]
\label{thm:convergence_sampling}
Under Assumptions~\ref{ass:minimizer},~\ref{ass:conv_smooth}, and~\ref{ass:bounded_grad}, let~$z^* = (w^*, \th^*)$ be any minimizer of~$f^m$,~$L_{m,\alpha} = L\max\bigl\{\alpha + \frac{\alpha m_{\max}}{M},\, \frac{m_{\max}}{M}\bigr\}$, and
\begin{equation}
\label{eq:sigma_q}
    \sigma_{m,\alpha}^2 := \frac{\alpha \bar \sigma_{w,m}^2 + \alpha \tilde \sigma_{w,m}^2 + \bar \sigma_{\theta,m}^2}{qM} + \alpha d_w \sigma_\zeta^2,
\end{equation}
where~$\bar \sigma_{w,m}^2 := \frac{1}{M} \sum_i m_i \sigma_{w,i}^2$ and~$\bar \sigma_{\theta,m}^2 := \frac{1}{M} \sum_i m_i \sigma_{\theta,i}^2$ and~$\tilde \sigma_{w,m}^2 := \frac{1}{M} \sum_i q(1-q) m_i^2 \|\nabla_w f_i(w^*, \theta_i^*)\|^2$.

With~$\eta = \min\bigl\{ \frac{1}{4 L_{m,\alpha}}, \frac{\|z^*\|_\alpha}{\sqrt{T} \sigma_{m,\alpha}}\bigr\}$ and~$C=G_m := G m_{\max}$, Algorithm~\ref{alg:sgd_ext} satisfies
\begin{align*}
\!\!\!\!\!\!
    \E[&f^m(\bar z_T) - f^m(z^*)] \leq \frac{4 L_{m,\alpha} \|z^*\|_\alpha^2}{T} + 3 \frac{\sigma_{m,\alpha}\|z^*\|_\alpha}{\sqrt{T}},
\!\!\!\!\!\!
\end{align*}
with~$\bar z_T = \frac{1}{T} \sum_{t=0}^{T-1} z_t$.
Taking~$\sigma_\zeta$ as in~\eqref{eq:sigma_zeta_sampling}, we have:
\begin{align}
    &\E[f^m(\bar z_T) - f^m(z^*)] \lesssim \frac{L_{m,\alpha} \|z^*\|_\alpha^2}{T} + \\
    &+  \|z^*\|_\alpha \sqrt{ \frac{ \alpha \bar \sigma_{w,m}^2 \!+\! \alpha \tilde \sigma_{w,m}^2 + \bar \sigma_{\theta,m}^2}{q MT} \!+\! \frac{ \alpha \dw G_m^2 \log(\frac{1}{\delta})}{M^2 \epsilon^2}} \nonumber
\end{align}

\end{theorem}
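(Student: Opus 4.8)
The plan is to mirror the proof of Theorem~\ref{thm:convergence}, since Algorithm~\ref{alg:sgd_ext} is again a \emph{preconditioned} stochastic gradient method on the reweighted objective~\eqref{eq:fq}, now run with client subsampling and heterogeneous minibatches. First I would rewrite one round as a single update $z_t = z_{t-1} - \eta\, A^{-1}\hat g_t$ on the stacked variable $z=(w,\th)$, where $A=\diag(\tfrac{1}{\alpha} I_{\dw}, I)$ is the metric inducing the $\alpha$-norm~\eqref{eq:alpha_norm}, so the effective step is $\alpha\eta$ on $w$ and $\eta$ on each $\theta_i$, and where $\hat g_t$ collects the subsampled, reweighted stochastic gradients together with the privacy noise $\zeta_t$ in the $w$-block. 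I would then check that, conditional on the history, $\hat g_t$ is unbiased for $\nabla f^m(z_{t-1})$: the normalization $1/(qM)$ exactly cancels the sampling probability $q$ and the weights $m_i/M$, in both the global and the local blocks.

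Next I would write the one-step inequality in the $\alpha$-norm, expanding $\E\|z_t - z^*\|_\alpha^2$ and using unbiasedness together with convexity (Assumption~\ref{ass:conv_smooth}) to lower-bound the inner-product term by the gap, so that $2\eta\,\E[f^m(z_{t-1}) - f^m(z^*)] \le \E[\|z_{t-1}-z^*\|_\alpha^2 - \|z_t - z^*\|_\alpha^2] + \eta^2\,\E\|\hat g_t\|_{A^{-1}}^2$, where the cross terms in $\zeta_t$ drop in expectation. Telescoping over $t=1,\dots,T$, dividing by $2\eta T$, and applying Jensen to $\bar z_T$ reduces the theorem to controlling two quantities: an effective smoothness governing the $1/T$ bias term, and the weighted second moment $\E\|\hat g_t\|_{A^{-1}}^2 = \alpha\,\E\|\hat g_w\|^2 + \E\|\hat g_\theta\|^2$ governing the variance. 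A standard expected-smoothness/co-coercivity argument splits this second moment into a function-gap part, absorbed into the left-hand side once $\eta\le \tfrac{1}{4L_{m,\alpha}}$, and a variance-at-the-optimum part.

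The crux — and the genuinely new part relative to Theorem~\ref{thm:convergence} — is bounding the variance part, which now has three independent sources: the per-sample minibatch noise, the Bernoulli client indicators $b_{i,t}$, and $\zeta_t$. I would compute it by a bias--variance decomposition, exploiting independence across clients so the cross terms collapse: the minibatch noise yields the $\bar\sigma_{w,m}^2$ and $\bar\sigma_{\theta,m}^2$ terms (scaled by $1/(qM)$), the \emph{client sampling} yields the variance of $b_{i,t}$ times the squared per-user gradient at the optimum, which is exactly the $\tilde\sigma_{w,m}^2$ term with its $q(1-q)m_i^2\|\nabla_w f_i(w^*,\theta_i^*)\|^2$ factor, and $\zeta_t$ yields $\alpha\dw\sigma_\zeta^2$; assembling these gives $\sigma_{m,\alpha}^2$ in~\eqref{eq:sigma_q}. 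In parallel I would read off the effective smoothness by bounding the $A^{-1}$-preconditioned Hessian of $f^m$ block by block: the reweighting $m_i/M$ and the $1/\alpha$ weighting turn the $L$-smoothness of each $f_i$ into $L\max\{\alpha+\alpha m_{\max}/M,\, m_{\max}/M\}$, the $m_{\max}/M$ coming from the heaviest local block and the $w$--$\theta$ coupling. I expect this variance accounting — tracking the three noise sources and the reweighting so the constants land precisely on $\sigma_{m,\alpha}^2$ and $L_{m,\alpha}$ — to be the main obstacle; the remainder (telescoping, Jensen, and choosing $\eta=\min\{\tfrac{1}{4L_{m,\alpha}}, \tfrac{\|z^*\|_\alpha}{\sqrt T\,\sigma_{m,\alpha}}\}$ to balance the bias and variance terms) is routine. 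Finally I would substitute $\sigma_\zeta$ from Theorem~\ref{prop:privacy_sampling} into~\eqref{eq:sigma_q}, turning $\alpha\dw\sigma_\zeta^2$ into $\alpha\dw G_m^2\log(1/\delta)/(M^2\epsilon^2)$ and yielding the stated generalization bound.
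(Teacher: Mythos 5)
Your proposal follows the paper's own proof route essentially step for step: rewriting each round as preconditioned SGD $z_t = z_{t-1} - \eta H^{-1} g_t$ in the metric inducing $\|\cdot\|_\alpha$, verifying $\E[g_t \mid \Fcal_{t-1}] = \nabla f^m(z_{t-1})$ (the $1/(qM)$ scaling cancels $q$ and the weights), splitting $\E\|g_t\|_{H^{-1}}^2 \leq 2\,\E\|g_t - g_t^*\|_{H^{-1}}^2 + 2\,\E\|g_t^*\|_{H^{-1}}^2$, computing $\E\|g_t^*\|_{H^{-1}}^2$ by the bias--variance decomposition over the three independent noise sources (minibatch noise, Bernoulli indicators $b_{i,t}$, Gaussian $\zeta_t$) to land exactly on $\sigma_{m,\alpha}^2$ in~\eqref{eq:sigma_q}, then telescoping, applying convexity to $\bar z_T$, balancing with the stated $\eta$, and substituting $\sigma_\zeta$ from Theorem~\ref{prop:privacy_sampling}. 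All of that matches the paper's Appendix proof and its variance computation.

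The one genuine gap is your plan to ``read off the effective smoothness by bounding the $A^{-1}$-preconditioned Hessian of $f^m$ block by block.'' The constant needed for the absorption step is not the smoothness of the deterministic objective $f^m$ but the \emph{expected smoothness of the stochastic estimator}, i.e., a bound of the form $\E[\|g_t - g_t^*\|_{H^{-1}}^2 \mid \Fcal_{t-1}] \leq 2 L_{\mathrm{exp}} \bigl(f^m(z_{t-1}) - f^m(z^*)\bigr)$. Because each sampled client's contribution carries the factor $b_{i,t}/(qM)$, second moments are inflated by $1/q$ relative to anything a Hessian bound on $f^m$ can see: for example, the local block of $g_t - g_t^*$ for user $i$ has conditional second moment $\frac{1}{qM^2}\,\E\bigl\|\sum_{k=1}^{m_i} \bigl(\nabla_\theta \tilde f_i^{(k)}(z_{t-1}) - \nabla_\theta \tilde f_i^{(k)}(z^*)\bigr)\bigr\|^2$. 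The paper's Lemma~\ref{lem:smoothness subsampled} accordingly proves $L_{\mathrm{exp}} = L\max\bigl(\alpha + \frac{\alpha m_{\max}}{qM}, \frac{m_{\max}}{qM}\bigr)$, obtained by applying per-sample co-coercivity (the $\|\nabla \tilde f - \nabla \tilde f(z^*)\|^2 \leq 2L(\cdot)$ inequality) \emph{inside} $g_t - g_t^*$ and running the same Bernoulli bias--variance decomposition you used for $\E\|g_t^*\|^2$. Your $q$-free constant happens to coincide with the $L_{m,\alpha}$ printed in the theorem statement, but this appears to be a typo in the paper --- its own proof uses the $q$-dependent constant --- and with the $q$-free value the step ``absorbed into the left-hand side once $\eta \leq \frac{1}{4 L_{m,\alpha}}$'' fails whenever $q < 1$, since the function-gap portion of $\E\|g_t\|_{H^{-1}}^2$ can exceed what that step size can absorb by a factor of $1/q$. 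The fix is to prove the expected-smoothness bound directly for the subsampled estimator; the rest of your argument then goes through verbatim.
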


Note that the privacy cost now scales with~$G_m / M =G m_{\max} / M$ instead of~$G / N$ in Theorem~\ref{thm:convergence}. Note that we have~$G m_{\max} / M \geq G / N$, with equality if and only if all the~$m_i$ are equal.
This highlights the additional cost of privacy with heterogeneous sample sizes, particularly when some users have much more data than others.
Compared to Theorem~\ref{thm:convergence}, there is also an additional variance term $\tilde{\sigma}_{w,m}$ induced by the randomness of client sampling.
Note that~$\tilde{\sigma}_{w,m}$ vanishes when (i) $q = 1$ (\ie, no sampling), or (ii) $\nabla_w f_i(w^*, \theta_i^*)=0$ for all~$i$, which always holds for the additive model~\eqref{eq:additive_model}.

When all users are sampled ($q=1$) and~$m_i = 1$ for each user, we have~$M = N$ and $\tilde{\sigma}_{w,m}=0$, and we recover the same bound as in Theorem~\ref{thm:convergence}, where~$T = n$ corresponds to the number of samples per user.
In this case the bias term decreases slowly as~$1/n$.
In contrast, if~$m_i = 1$ and~$q = 1/N$, \ie, we sample one user per round on average, then~$qM = 1$, and~$T$ now corresponds to the total number of samples~$Nn$. The bias term now decreases as~$1/Nn$, while the variance decreases at the same~$1/\sqrt{Nn}$ rate.
Of course, this comes at the cost of more frequent communication rounds, since each round only uses the data of a single user.

\paragraph{Optimizing average user performance.}
In some cases, we may have clients with heterogeneous amounts of data, but still want to optimize the average performance~$f$ in~\eqref{eq:opt}, rather than the weighted average~$f^m$ in~\eqref{eq:fq}.
This can be achieved by replacing~$g_{\theta,i}^t$ and~$g_{w,i}^t$ by averages over the minibatch instead of sums in Algorithm~\ref{alg:sgd_ext}.
This leads to the following guarantee on the excess risk (see Appendix~\ref{sub:risk_equal_nonu}):
\begin{align}
    &\E[f(\bar z_T) - f(z^*)] \lesssim \\
    &\frac{ L_\alpha \|z^*\|_\alpha^2}{T} +  \|z^*\|_\alpha \sqrt{ \frac{ \alpha \bar \sigma_w^2 + \bar \sigma_\theta^2}{q N m_{\min} T} + \frac{ \alpha \dw G^2 \log(\frac{1}{\delta})}{N^2 \epsilon^2}}. \label{eq:excess_risk_nonu} 
\end{align}
Note that the privacy cost is similar to the case with homogeneous data in Section~\ref{sec:algorithm}, rather than the larger cost of Theorem~\ref{thm:convergence_sampling}, since we are now treating each user equally.
Nevertheless, the variance term displays an effective total sample size of~$n_{\text{tot}} = qN m_{\min} T$, which is smaller than in Theorem~\ref{thm:convergence_sampling}, where~$n_{\text{tot}} = q M T$ corresponds to the expected total number of samples, unless all the~$m_i$ are equal. This highlights that this improvement in privacy comes at a cost in statistical efficiency.\looseness=-1

\section{Experiments}
\label{sec:experiments}

\begin{figure*}[ht!]
\centering
\includegraphics[width=.32\textwidth]{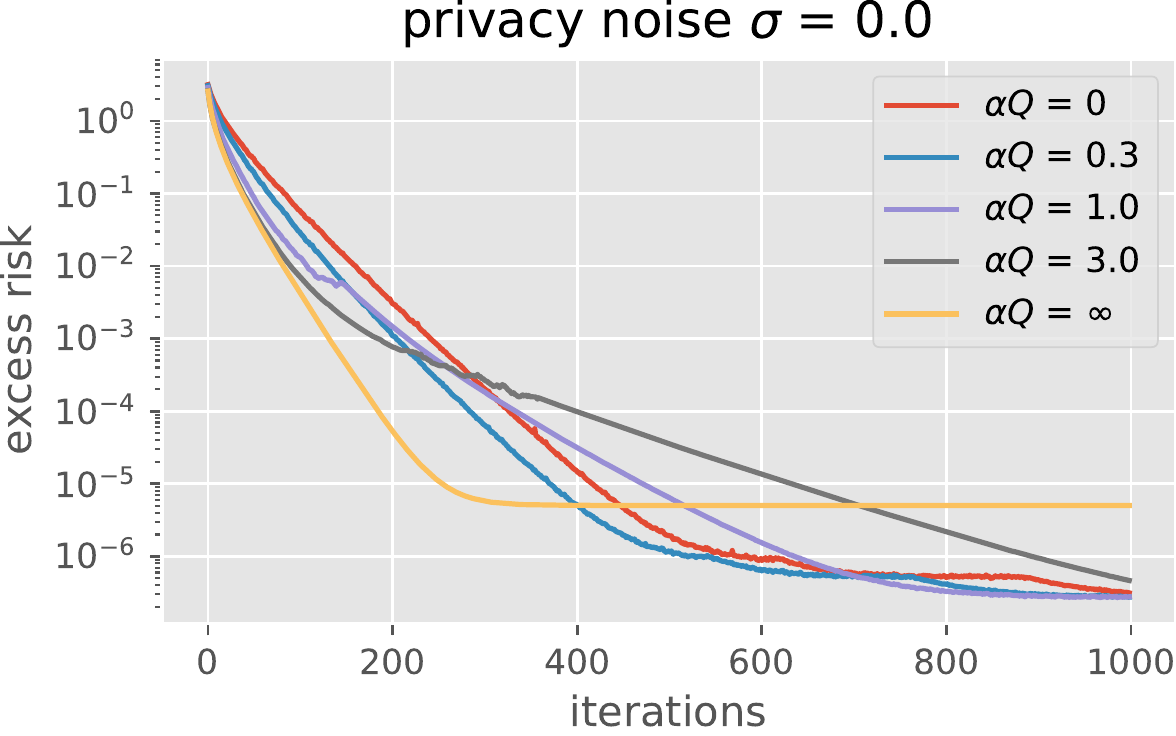}
\includegraphics[width=.33\textwidth]{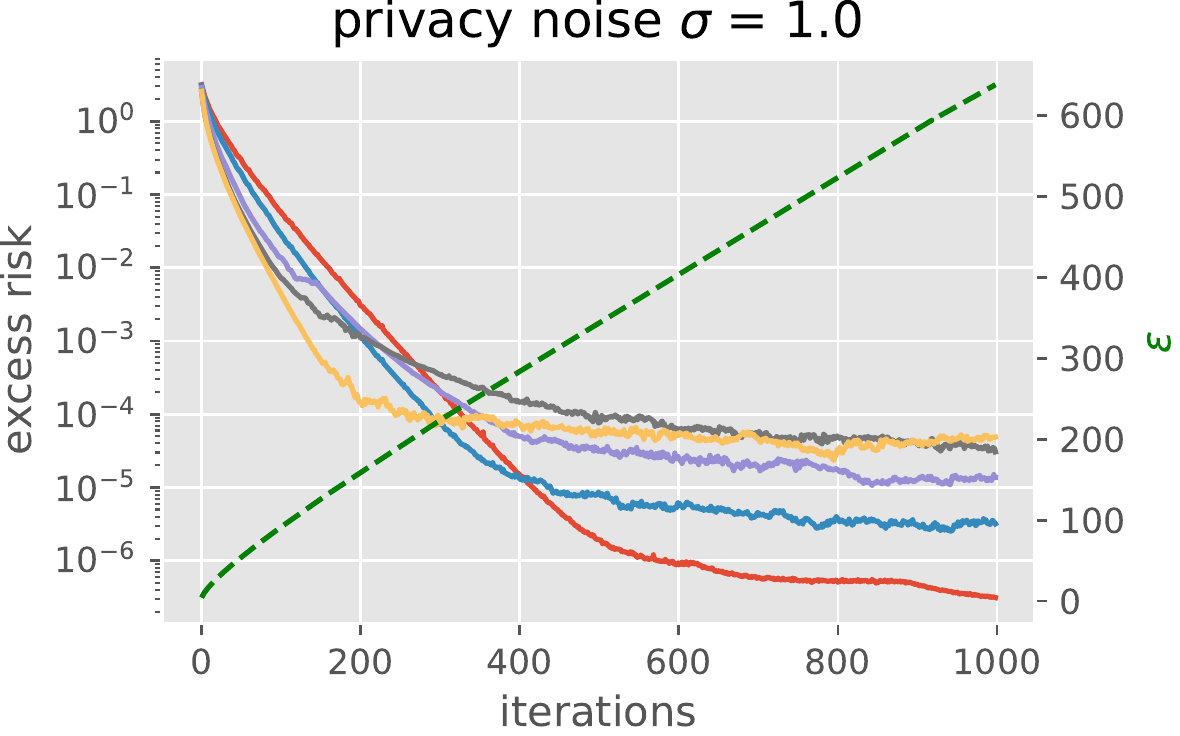}
\includegraphics[width=.33\textwidth]{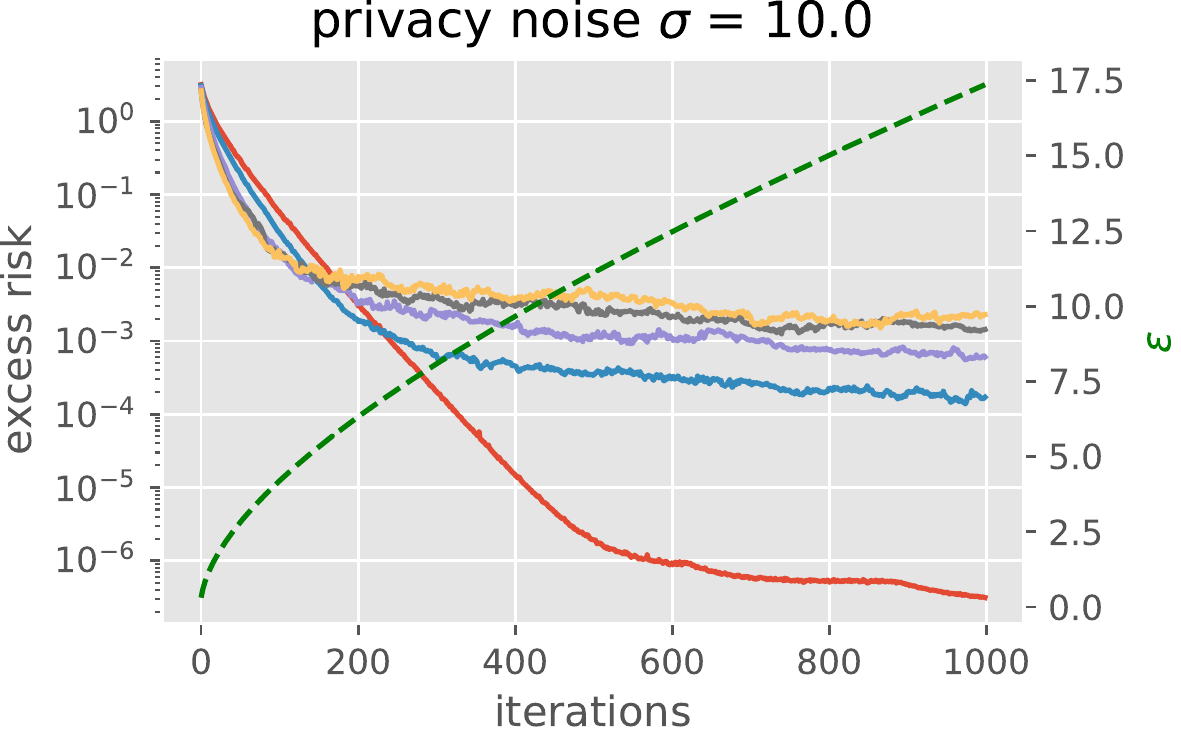}

(a) \textbf{Synthetic}: $N=1000$, $d=100$, $C=10$. One pass with~$Q=N$ and~$m=10$.

\vspace{0.5cm}

\includegraphics[width=.32\textwidth]{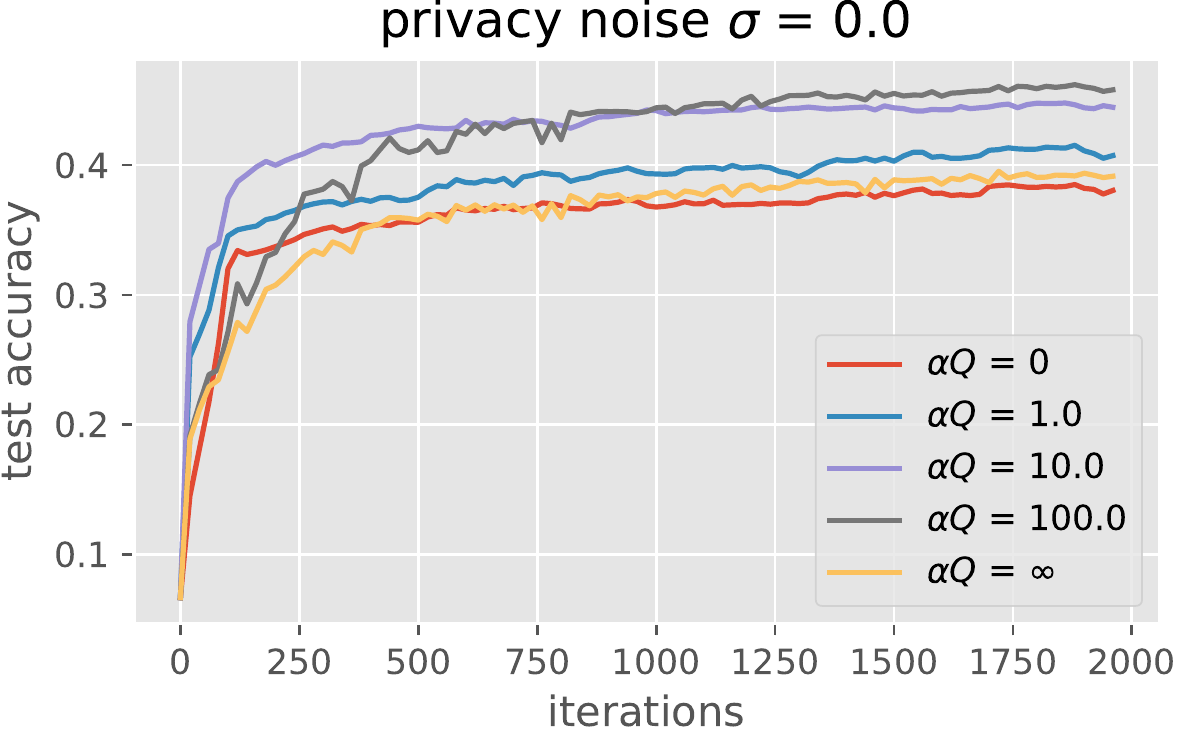}
\includegraphics[width=.33\textwidth]{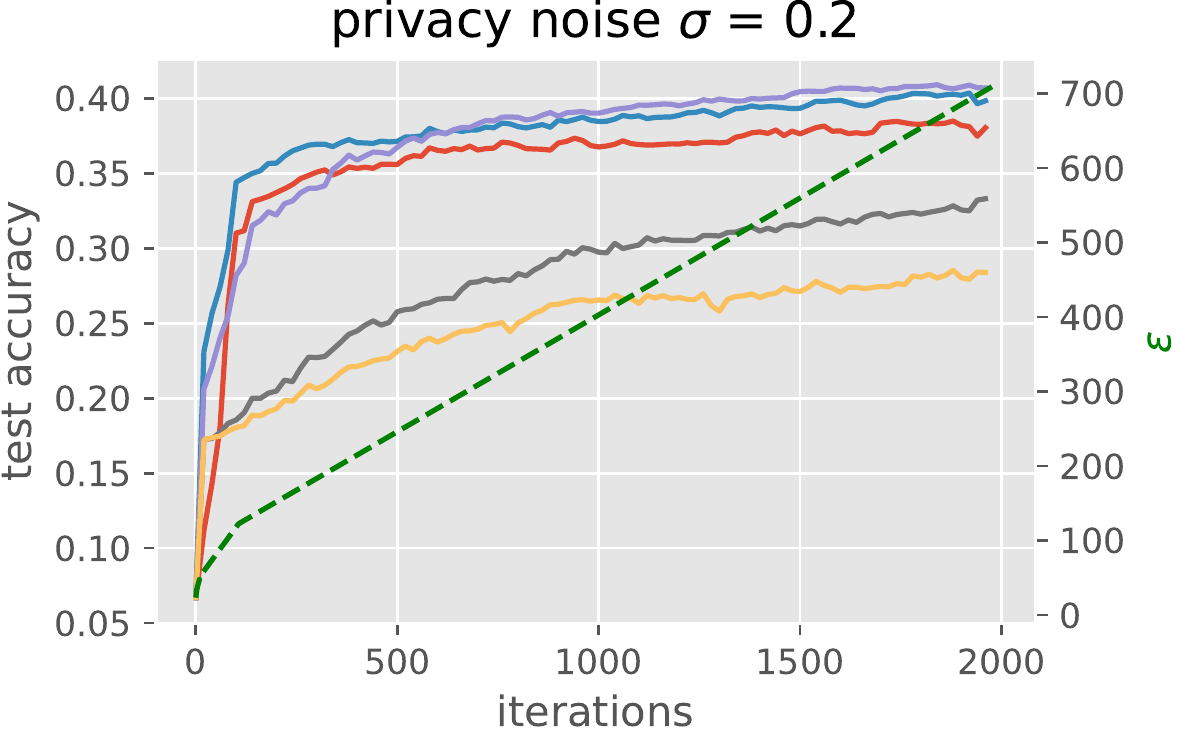}
\includegraphics[width=.33\textwidth]{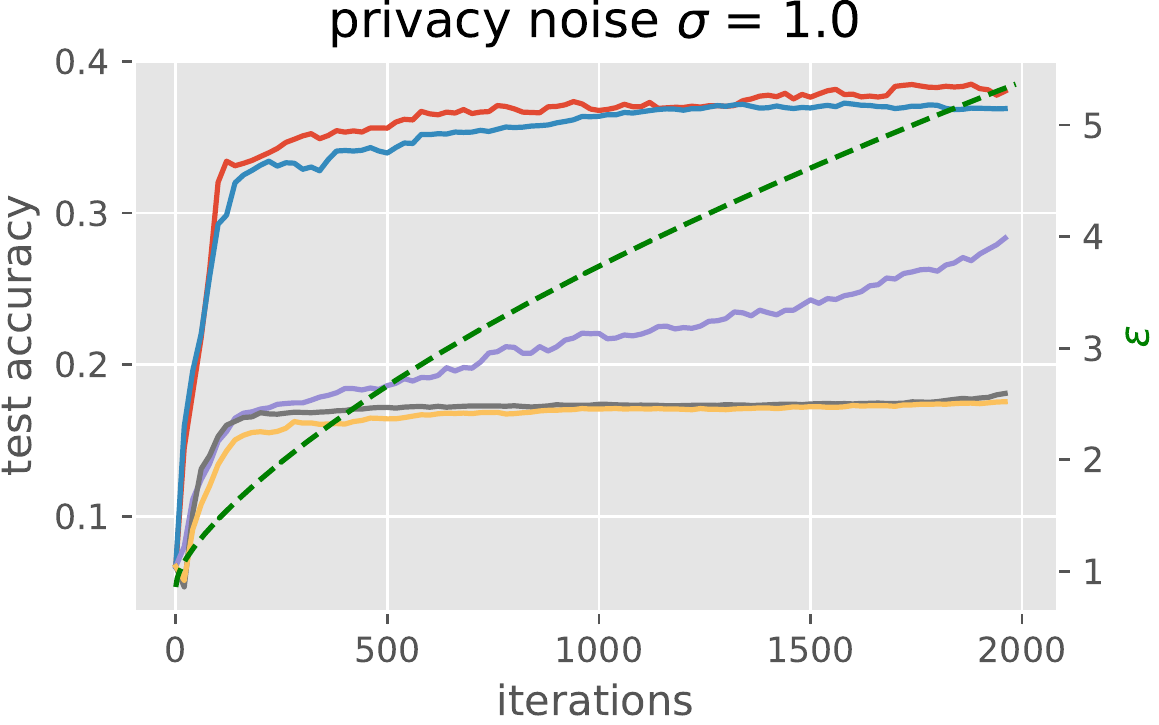}

(b) \textbf{Stackoverflow tag prediction}: $N=500$, $d=5000 \times 80$, $C=0.01$. One pass with~$Q=10$ and~$m=10$.

\vspace{0.5cm}

\includegraphics[width=.32\textwidth]{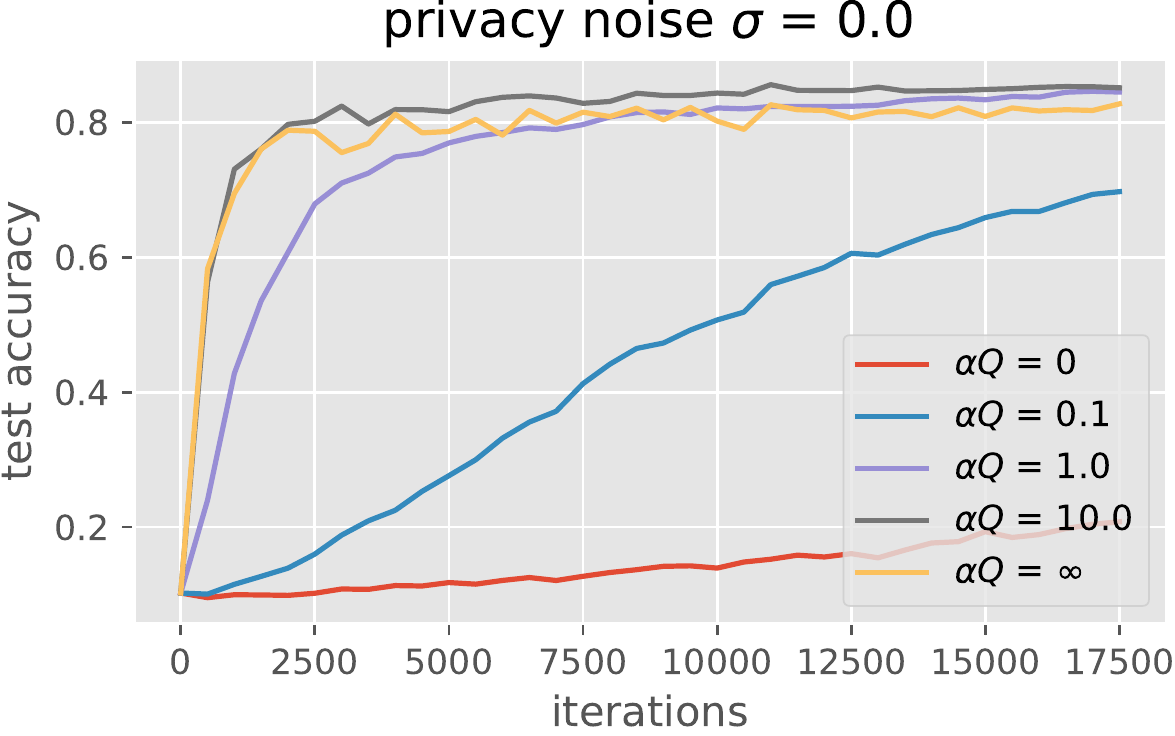}
\includegraphics[width=.33\textwidth]{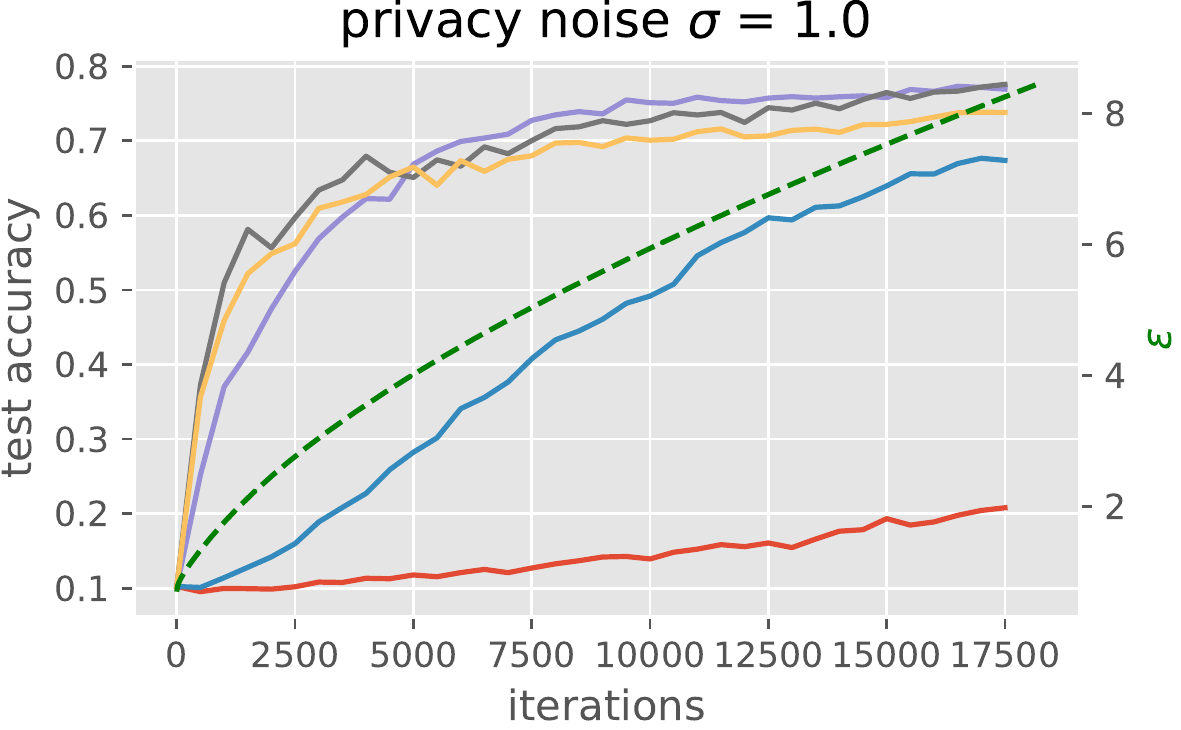}
\includegraphics[width=.33\textwidth]{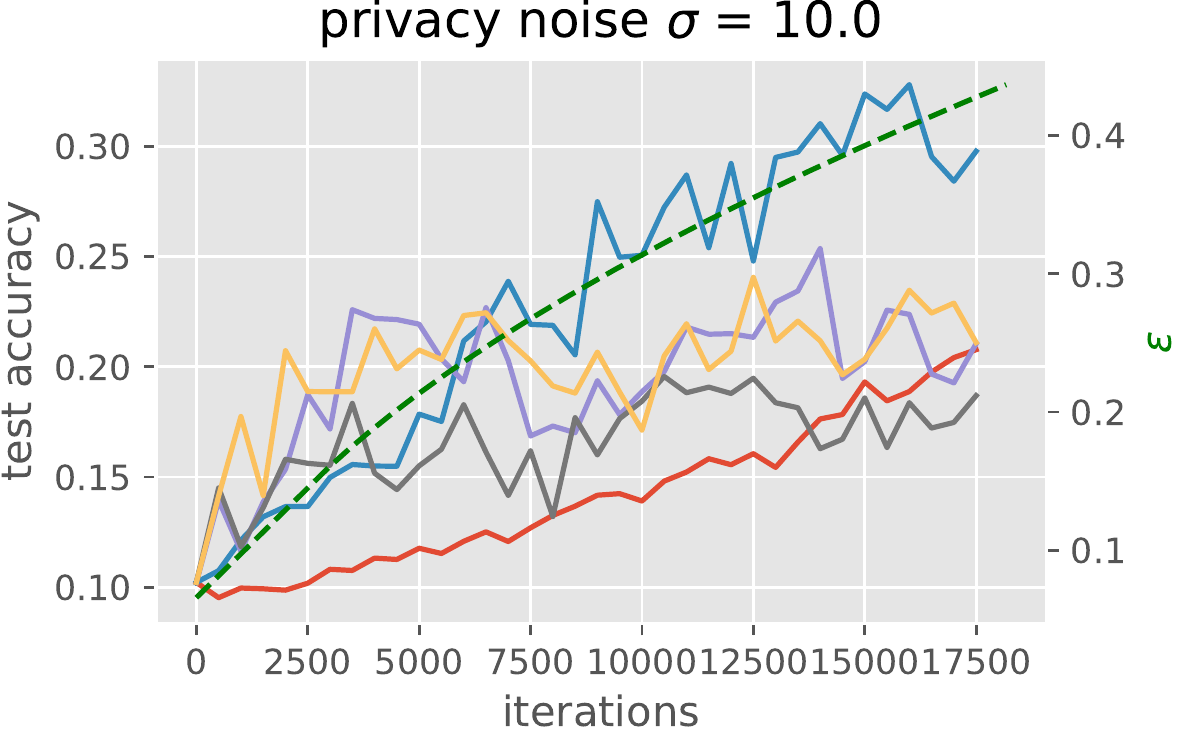}

(c) \textbf{EMNIST linear}: $N=1000$, $d=784 \times 10$, $C=10$.~~20 passes with~$Q=10$ and~$m=10$.

\vspace{0.5cm}

\includegraphics[width=.32\textwidth]{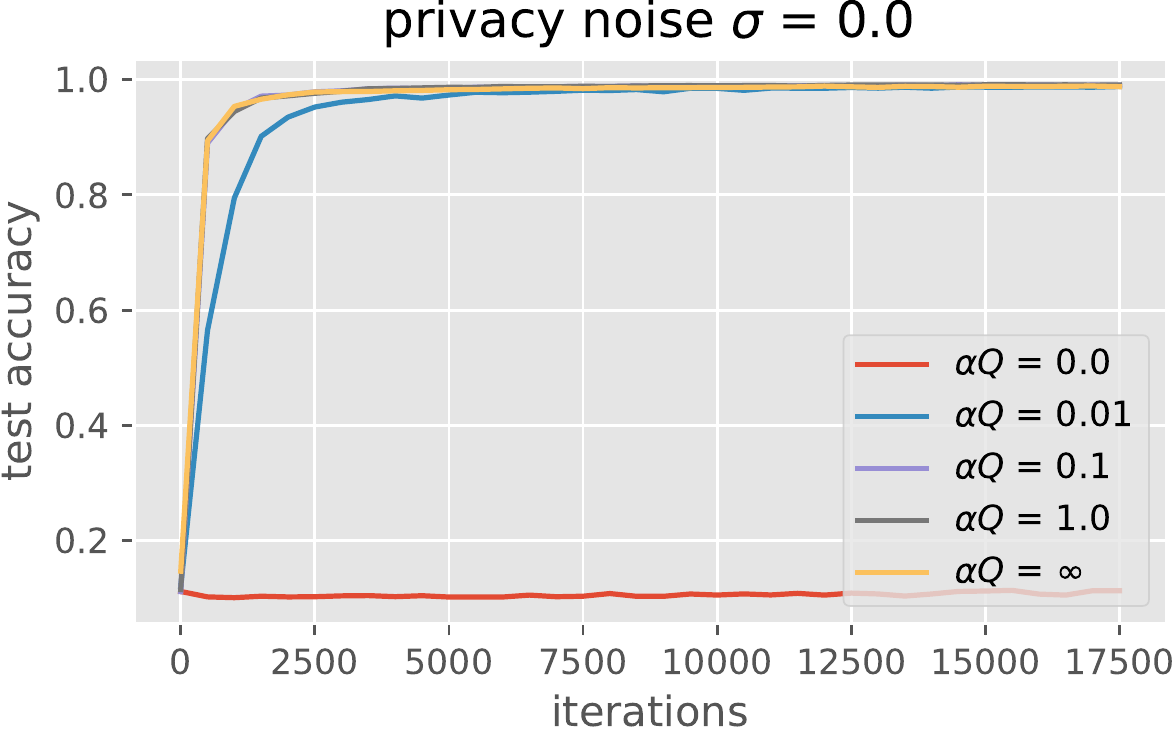}
\includegraphics[width=.33\textwidth]{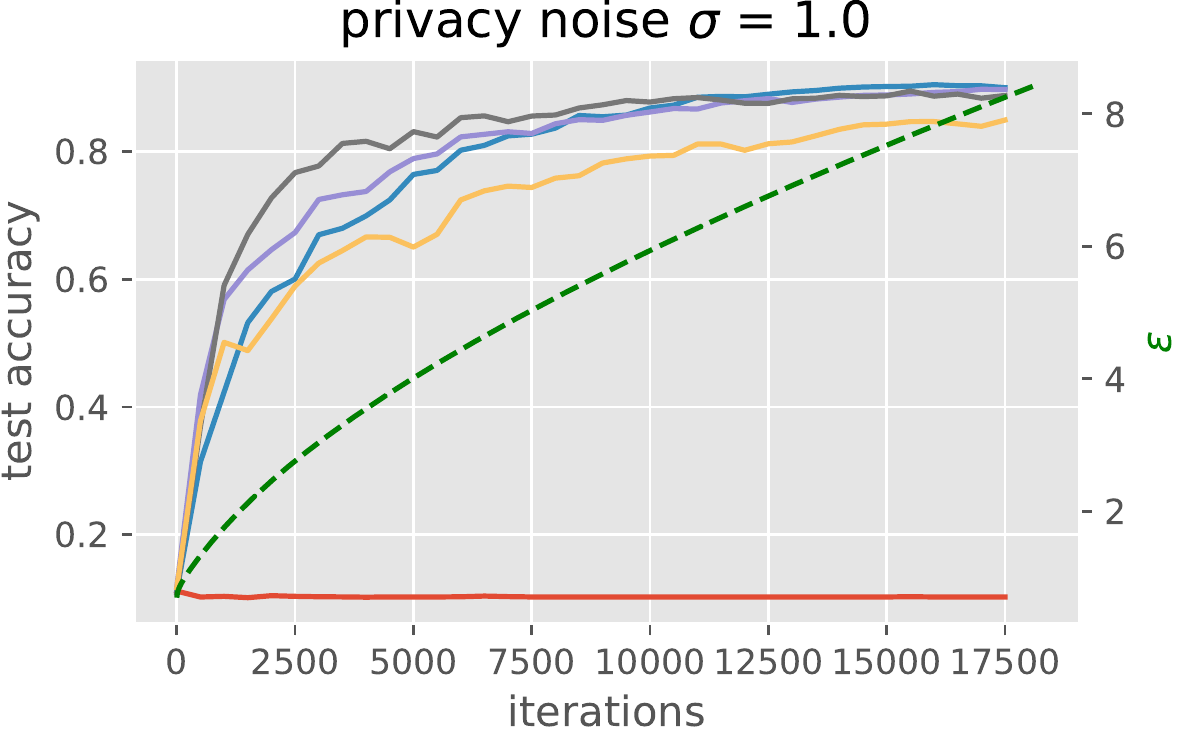}
\includegraphics[width=.33\textwidth]{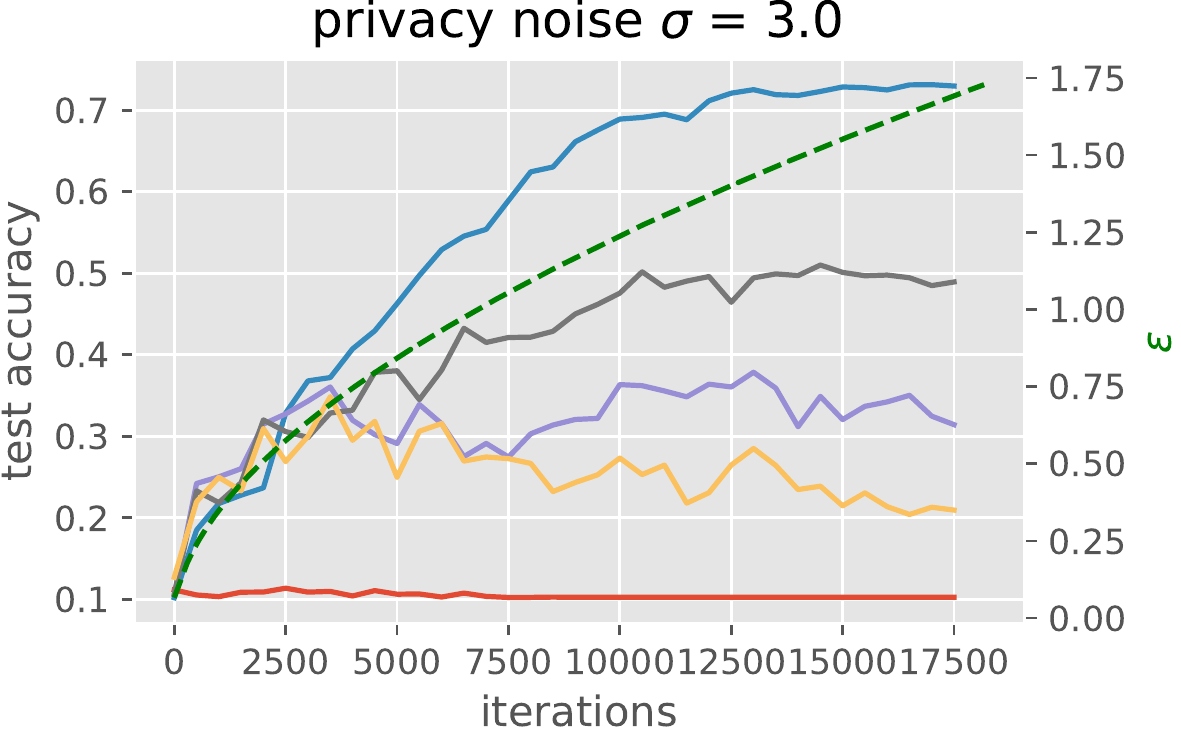}

(d) \textbf{EMNIST CNN}: $N=1000$, $d=784 \times 10$, $C=0.1$.~~20 passes with~$Q=10$ and~$m=10$.

\caption{Test performance of PP-SGD on the synthetic, Stackoverflow, and EMNIST datasets. Each plot shows curves for different levels of personalization~$\alpha$ for fixed privacy noise~$\sigma$ and clipping parameter~$C$. In the private setting, the green dashed line displays the privacy guarantee~$\epsilon$ at iteration~$T$ for all curves in each plot (except~$\alpha=0$ which is always fully private).
$Q$ is the number of sampled users and~$m$ the number of samples per user in each round.
The choice~$\alpha Q=1$ equalizes the variance of global and local updates due to samples.
}
\label{fig:opt_curves}
\end{figure*}

\begin{figure*}[htb]
    \centering
    \includegraphics[width=.30\textwidth]{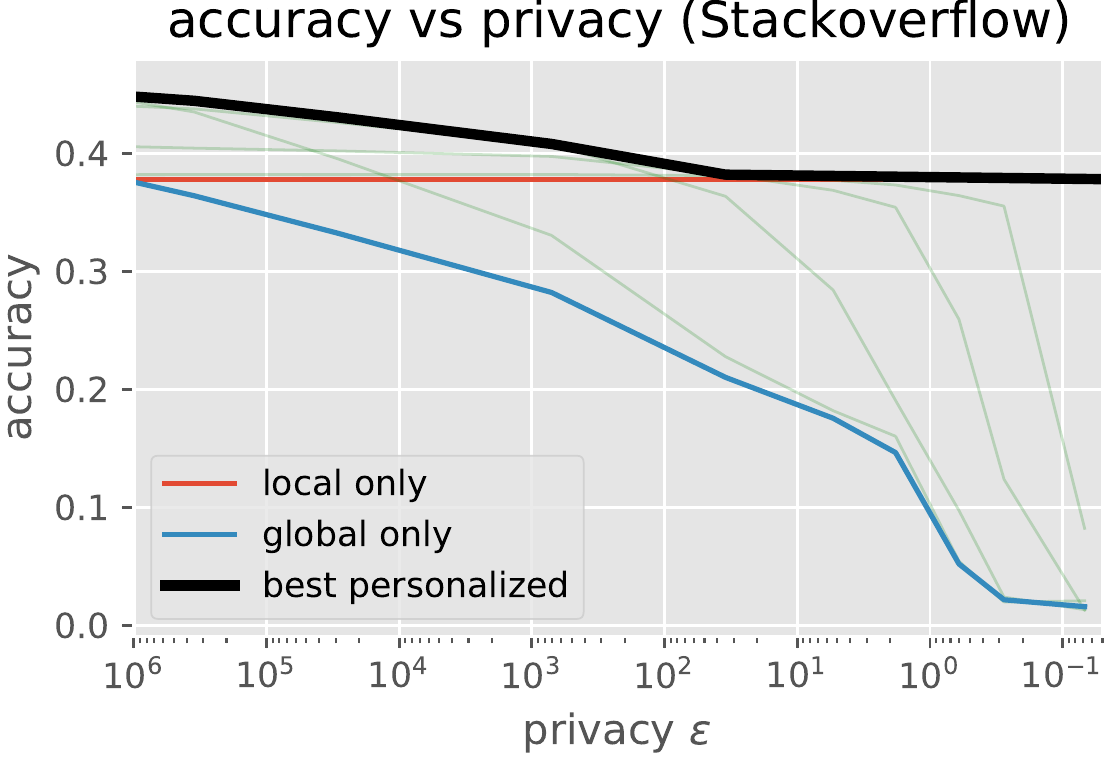}~~~
    \includegraphics[width=.32\textwidth]{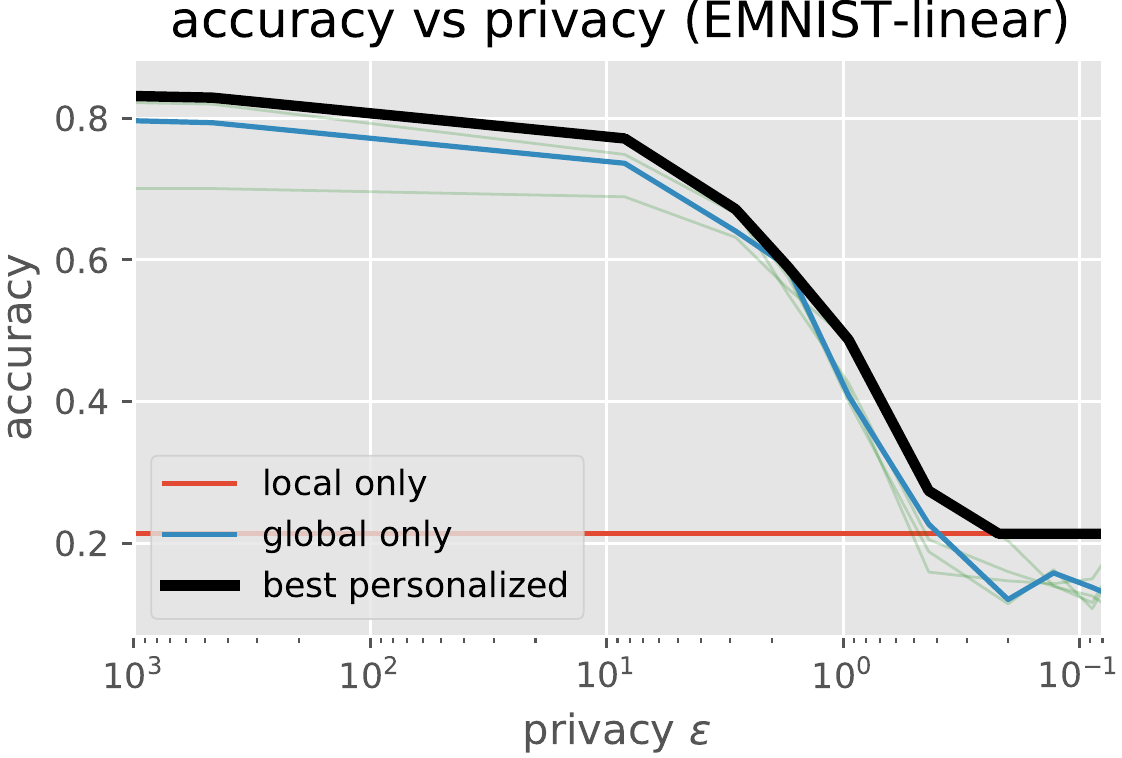}
    \includegraphics[width=.32\textwidth]{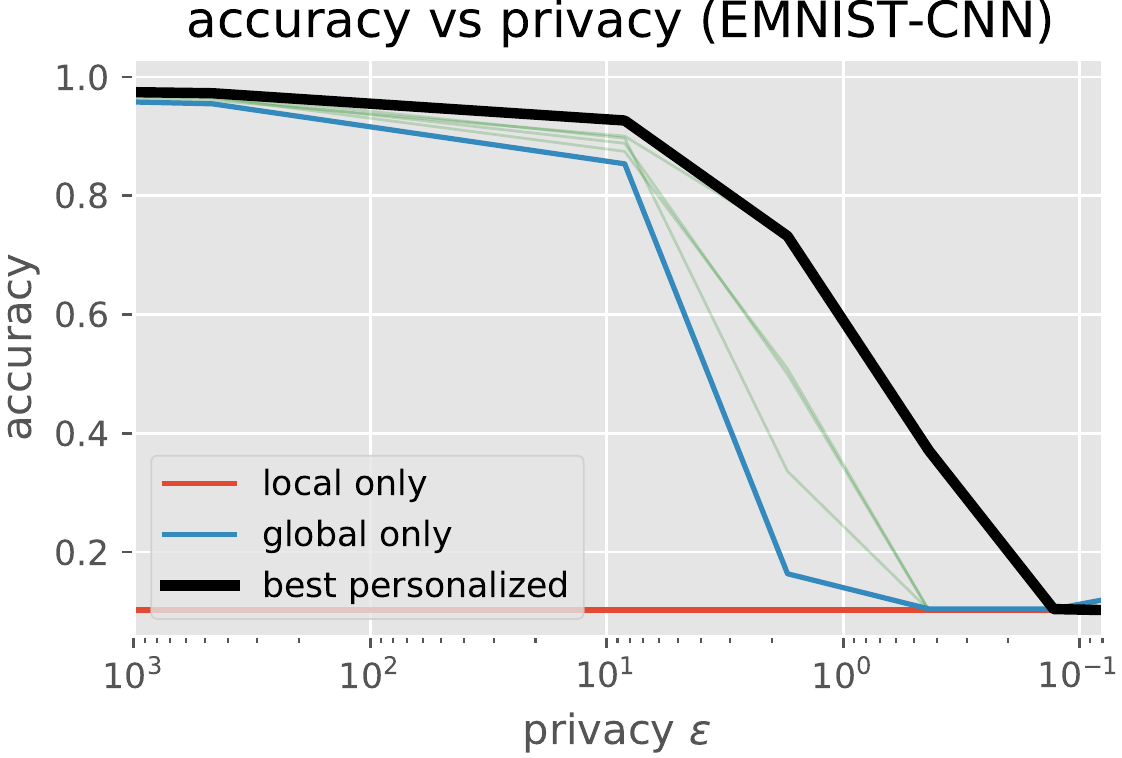}
    \caption{Accuracy--privacy tradeoffs on Stackoverflow and EMNIST at the end of training. The red, blue, and light green curves are obtained by varying the amount of privacy noise~$\sigma$ while optimizing the learning rate with local, global, and intermediate models, respectively. The black curve also optimizes over the level of personalization, which improves the tradeoff.}
    \label{fig:tradeoffs}
\end{figure*}

In this section, we present numerical experiments that illustrate our theoretical guarantees on both synthetic and real-world federated learning datasets.
Our code is available at~\url{https://github.com/albietz/ppsgd}.

\paragraph{Experiment setup.}
We consider linear additive models of the form~\eqref{eq:additive_model} with the squared loss. For classification datasets, we use a one-versus-all setup.
Unless otherwise mentioned, we run our algorithms for one pass over the data, simulating the online setting considered by our theory.
We sample a fixed number of users (denoted~$Q$) at each iteration, chosen randomly without replacement, and consider minibatches of size~$m=10$ for each user.
Following standard practice, we compute privacy guarantees~$\epsilon$ at each iteration using the moments accountant~\citep{abadi2016deep}, with a fixed~$\delta=10^{-4}$, after applying Gaussian noise of variance~$\sigma_\zeta^2 = \sigma^2 C^2$, where~$C$ is a clipping parameter and~$\sigma$ a noise multiplier.
For each run, we consider a fixed step-size~$\eta$, personalization parameter~$\alpha$, clipping parameter~$C$, and noise multiplier~$\sigma$, chosen from a grid (see Appendix~\ref{sec:experiments_appx}).
In order to assess the best achievable performance, we optimize the learning rate separately at each number of iterations reported in our figures.

\paragraph{Synthetic dataset.}
We begin with a synthetic regression example, where each user's data is generated from a ground truth linear model, and multiple coordinates of the parameters are shared across users.
We consider~$N=1000$ users and data in~$d=100$ dimensions.
The parameters~$\theta_i^*$ are generated as follows from a random~$\theta_0 \sim \Ncal(0, 10^2 I_d)$:
\begin{align*}
[\theta^*_i]_{1:95} &= [\theta_0]_{1:95} \\
[\theta^*_i]_{96:100} &= [\theta_0]_{96:100} + \delta_i, \quad \delta_i \sim \Ncal(0, 0.01^2 I),
\end{align*}
while the samples~$(x_i, y_i)$ for user~$i$ are generated by the following process:
\begin{align*}
x_{i} &\sim \Ncal(0, \Sigma), \qquad \Sigma = \diag\{\lambda_k\}_{k=1}^d, \lambda_k = 1/k \\
y_{i} &= \theta_i^{*\top}x_i + \varepsilon_i, \quad \varepsilon_i \sim \Ncal(0, \tau^2).
\end{align*}
With this setup, the excess risk can be computed in closed form, and is given by~$\frac{1}{N} \sum_i \|w + \theta_i - \theta_i^*\|_{\Sigma}^2$, and is shown in Figure~\ref{fig:opt_curves}a.
We observe that global learning ($\alpha=\infty$) enjoys faster convergence at initial iterations in all cases, thanks to better data efficiency, but eventually stops improving, even without privacy noise, since minimizing risk requires personalization in this model.
Small choices of~$\alpha$ lead to better performance in later iterations, and with larger levels of privacy noise, full local learning ($\alpha=0$) dominates joint learning quite early in the learning process, after about 200 iterations, \ie, 2000 samples per user.

\paragraph{Stackoverflow tag prediction.}
We now consider a subset of the Stackoverflow dataset\footnote{\url{https://www.tensorflow.org/federated/api_docs/python/tff/simulation/datasets/stackoverflow/load_data}} consisting of $N=500$ users, each with about~300--500 training and 10--100 test documents in a bag-of-words format of dimension~5000. The task is to predict Stackoverflow tags from documents that consist of user questions, and we use the~80 most popular tags as labels.

In Figure~\ref{fig:opt_curves}b, we show the top-1 accuracy on the test set, while running our method for a single pass over the data, sampling $Q=10$ users and $m=10$ documents per user at each iteration.
We observe that without privacy, the best performance is obtained with an intermediate amount of personalization, $\alpha=10$, suggesting that local models are helpful for generalization on this dataset. With privacy noise, however, models with large~$\alpha$ quickly degrade in performance, and small~$\alpha$ performs best for achieving reasonable privacy levels.
Overall, personalization plays a key role for improving the privacy--accuracy tradeoff on this dataset, as shown in Figure~\ref{fig:tradeoffs}.

\paragraph{Federated EMNIST.}
Finally, we consider a subset of the federated EMNIST digit classification dataset\footnote{\url{https://www.tensorflow.org/federated/api_docs/python/tff/simulation/datasets/emnist/load_data}} consisting of~$N=1000$ users with about $100$ training and $10$ test samples per user.
Here, a single pass does not lead to good performance, suggesting this may be a hard learning problem that benefits from multiple passes~\citep{pillaud2018statistical}.
We thus run our method for 20 epochs, sampling 10 users and 10 images per user at each iteration, and show the resulting test accuracy in Figure~\ref{fig:opt_curves}c.
On this dataset, we see that local learning does generally poorly, perhaps because there is a lot of shared structure in images across users that benefits significantly from global learning.
Nevertheless, there is still a benefit to using small~$\alpha$ for achieving good privacy guarantees, and personalization still plays a role in improving the privacy--accuracy tradeoff (albeit less pronounced than on Stackoverflow, see Figure~\ref{fig:tradeoffs}).

In Figure~\ref{fig:opt_curves}d, we show similar plots on federated EMNIST for a 4-layer CNN with a shared global representation~$\Phi_W(x)$ and an additive model at the output layer, leading to a loss of the form
\begin{equation*}
f_i((w, W), \theta_i, (x, y)) = \ell(y, (w + \theta_i)^\top \Phi_W(x)).
\end{equation*}
We see that similar conclusions hold in this non-convex scenario, namely, small~$\alpha$ is helpful for achieving better privacy guarantees, while large~$\alpha$ leads to better accuracy for poor privacy levels. Here the local-only models ($\alpha = 0$) perform very poorly, since the global representation has frozen random weights, highlighting that learning a useful representation in this model requires a privacy cost.

\section{Discussion and Conclusion}
\label{sec:discussion}

In this paper, we studied personalized private federated optimization algorithms, providing generalization guarantees in the convex setting that highlight the role of personalization and privacy.
We show both theoretically and empirically that adjusting the amount of personalization is always beneficial for improving the accuracy--privacy tradeoff, by providing a way to attenuate the cost of privacy through local learning at large sample sizes.
Promising future directions include extending our analysis to different models of personalization, \eg, based on neural networks, and to provide adaptive algorithms which may dynamically adjust the level of personalization in an online fashion.





\bibliography{full,bibli}

\begin{thebibliography}{41}
\providecommand{\natexlab}[1]{#1}
\providecommand{\url}[1]{\texttt{#1}}
\expandafter\ifx\csname urlstyle\endcsname\relax
  \providecommand{\doi}[1]{doi: #1}\else
  \providecommand{\doi}{doi: \begingroup \urlstyle{rm}\Url}\fi

\bibitem[Abadi et~al.(2016)Abadi, Chu, Goodfellow, McMahan, Mironov, Talwar,
  and Zhang]{abadi2016deep}
Abadi, M., Chu, A., Goodfellow, I., McMahan, H.~B., Mironov, I., Talwar, K.,
  and Zhang, L.
\newblock Deep learning with differential privacy.
\newblock In \emph{Proceedings of the 2016 ACM SIGSAC conference on computer
  and communications security}, 2016.

\bibitem[Agarwal et~al.(2020)Agarwal, Langford, and Wei]{agarwal2020federated}
Agarwal, A., Langford, J., and Wei, C.-Y.
\newblock Federated residual learning.
\newblock \emph{arXiv preprint arXiv:2003.12880}, 2020.

\bibitem[Arivazhagan et~al.(2019)Arivazhagan, Aggarwal, Singh, and
  Choudhary]{arivazhagan2019federated}
Arivazhagan, M.~G., Aggarwal, V., Singh, A.~K., and Choudhary, S.
\newblock Federated learning with personalization layers.
\newblock \emph{arXiv preprint arXiv:1912.00818}, 2019.

\bibitem[Bassily et~al.(2014)Bassily, Smith, and Thakurta]{bassily2014private}
Bassily, R., Smith, A., and Thakurta, A.
\newblock Private empirical risk minimization: Efficient algorithms and tight
  error bounds.
\newblock In \emph{2014 IEEE 55th Annual Symposium on Foundations of Computer
  Science}, 2014.

\bibitem[Bellet et~al.(2018)Bellet, Guerraoui, Taziki, and
  Tommasi]{bellet2018personalized}
Bellet, A., Guerraoui, R., Taziki, M., and Tommasi, M.
\newblock Personalized and private peer-to-peer machine learning.
\newblock In \emph{Proceedings of the International Conference on Artificial
  Intelligence and Statistics (AISTATS)}, 2018.

\bibitem[Chen et~al.(2020)Chen, Wu, and Hong]{ChenWH20}
Chen, X., Wu, Z.~S., and Hong, M.
\newblock Understanding gradient clipping in private {SGD:} {A} geometric
  perspective.
\newblock In \emph{Advances in Neural Information Processing Systems
  (NeurIPS)}, 2020.

\bibitem[Dekel et~al.(2012)Dekel, Gilad-Bachrach, Shamir, and
  Xiao]{dekel2012optimal}
Dekel, O., Gilad-Bachrach, R., Shamir, O., and Xiao, L.
\newblock Optimal distributed online prediction using mini-batches.
\newblock \emph{Journal of Machine Learning Research}, 13\penalty0 (1), 2012.

\bibitem[Deng et~al.(2020)Deng, Kamani, and Mahdavi]{deng2020adaptive}
Deng, Y., Kamani, M.~M., and Mahdavi, M.
\newblock Adaptive personalized federated learning.
\newblock \emph{arXiv preprint arXiv:2003.13461}, 2020.

\bibitem[Dinh et~al.(2020)Dinh, Tran, and Nguyen]{dinh2020personalized}
Dinh, C.~T., Tran, N., and Nguyen, J.
\newblock Personalized federated learning with moreau envelopes.
\newblock \emph{Advances in Neural Information Processing Systems (NeurIPS)},
  2020.

\bibitem[Dwork et~al.(2006)Dwork, McSherry, Nissim, and
  Smith]{dwork2006calibrating}
Dwork, C., McSherry, F., Nissim, K., and Smith, A.
\newblock Calibrating noise to sensitivity in private data analysis.
\newblock In \emph{Theory of cryptography conference}, 2006.

\bibitem[Dwork et~al.(2010{\natexlab{a}})Dwork, Naor, Pitassi, and
  Rothblum]{panprivacy}
Dwork, C., Naor, M., Pitassi, T., and Rothblum, G.~N.
\newblock Differential privacy under continual observation.
\newblock In \emph{Proceedings of the Forty-Second ACM Symposium on Theory of
  Computing}, 2010{\natexlab{a}}.

\bibitem[Dwork et~al.(2010{\natexlab{b}})Dwork, Rothblum, and
  Vadhan]{dwork2010boosting}
Dwork, C., Rothblum, G.~N., and Vadhan, S.
\newblock Boosting and differential privacy.
\newblock In \emph{2010 IEEE 51st Annual Symposium on Foundations of Computer
  Science}, 2010{\natexlab{b}}.

\bibitem[Feldman et~al.(2020)Feldman, Koren, and Talwar]{feldman2020private}
Feldman, V., Koren, T., and Talwar, K.
\newblock Private stochastic convex optimization: optimal rates in linear time.
\newblock In \emph{Proceedings of the 52nd Annual ACM SIGACT Symposium on
  Theory of Computing}, 2020.

\bibitem[Finn et~al.(2017)Finn, Abbeel, , and Levine]{finn2017modelagnostic}
Finn, C., Abbeel, P., , and Levine, S.
\newblock Model-agnostic meta-learning for fast adaptation of deep networks.
\newblock In \emph{Proceedings of the International Conference on Machine
  Learning (ICML)}, 2017.

\bibitem[Hanzely \& Richt{\'a}rik(2020)Hanzely and
  Richt{\'a}rik]{hanzely2020federated}
Hanzely, F. and Richt{\'a}rik, P.
\newblock Federated learning of a mixture of global and local models.
\newblock \emph{arXiv preprint arXiv:2002.05516}, 2020.

\bibitem[Hanzely et~al.(2021)Hanzely, Zhao, and Kolar]{hanzely2021personalized}
Hanzely, F., Zhao, B., and Kolar, M.
\newblock Personalized federated learning: A unified framework and universal
  optimization techniques.
\newblock \emph{arXiv preprint arXiv:2102.09743}, 2021.

\bibitem[Hsu et~al.(2014)Hsu, Huang, Roth, Roughgarden, and Wu]{HsuHRRW14}
Hsu, J., Huang, Z., Roth, A., Roughgarden, T., and Wu, Z.~S.
\newblock Private matchings and allocations.
\newblock In \emph{Symposium on Theory of Computing, {STOC}}, 2014.

\bibitem[Hu et~al.(2021)Hu, Wu, and Smith]{hu2021private}
Hu, S., Wu, Z.~S., and Smith, V.
\newblock Private multi-task learning: Formulation and applications to
  federated learning.
\newblock \emph{arXiv preprint arXiv:2108.12978}, 2021.

\bibitem[Huang et~al.(2021)Huang, Chu, Zhou, Wang, Liu, Pei, and
  Zhang]{huang2021personalized}
Huang, Y., Chu, L., Zhou, Z., Wang, L., Liu, J., Pei, J., and Zhang, Y.
\newblock Personalized cross-silo federated learning on non-iid data.
\newblock In \emph{Proceedings of the AAAI Conference on Artificial
  Intelligence}, 2021.

\bibitem[Jain et~al.(2021)Jain, Rush, Smith, Song, and
  Guha~Thakurta]{jain2021differentially}
Jain, P., Rush, J., Smith, A., Song, S., and Guha~Thakurta, A.
\newblock Differentially private model personalization.
\newblock In \emph{Advances in Neural Information Processing Systems
  (NeurIPS)}, 2021.

\bibitem[Jiang et~al.(2019)Jiang, Kone{\v{c}}n{\`y}, Rush, and
  Kannan]{jiang2019improving}
Jiang, Y., Kone{\v{c}}n{\`y}, J., Rush, K., and Kannan, S.
\newblock Improving federated learning personalization via model agnostic meta
  learning.
\newblock \emph{arXiv preprint arXiv:1909.12488}, 2019.

\bibitem[Kairouz et~al.(2021)Kairouz, McMahan, et~al.]{kairouz2019advances}
Kairouz, P., McMahan, H.~B., et~al.
\newblock Advances and open problems in federated learning.
\newblock \emph{Foundations and Trends in Machine Learning}, 14\penalty0
  (1–2):\penalty0 1--210, 2021.

\bibitem[Kearns et~al.(2014)Kearns, Pai, Roth, and Ullman]{KPRU}
Kearns, M., Pai, M.~M., Roth, A., and Ullman, J.
\newblock Mechanism design in large games: Incentives and privacy.
\newblock \emph{American Economic Review}, 104\penalty0 (5):\penalty0 431--35,
  May 2014.

\bibitem[Khodak et~al.(2019)Khodak, Balcan, and Talwalkar]{khodak2019adaptive}
Khodak, M., Balcan, M.-F.~F., and Talwalkar, A.~S.
\newblock Adaptive gradient-based meta-learning methods.
\newblock In \emph{Advances in Neural Information Processing Systems
  (NeurIPS)}, volume~32, 2019.

\bibitem[Levy et~al.(2021)Levy, Sun, Amin, Kale, Kulesza, Mohri, and
  Suresh]{levy2021learning}
Levy, D., Sun, Z., Amin, K., Kale, S., Kulesza, A., Mohri, M., and Suresh,
  A.~T.
\newblock Learning with user-level privacy.
\newblock In \emph{Advances in Neural Information Processing Systems
  (NeurIPS)}, 2021.

\bibitem[Li et~al.(2020)Li, Khodak, Caldas, and
  Talwalkar]{li2020differentially}
Li, J., Khodak, M., Caldas, S., and Talwalkar, A.
\newblock Differentially private meta-learning.
\newblock In \emph{Proceedings of the International Conference on Learning
  Representations (ICLR)}, 2020.

\bibitem[Mansour et~al.(2020)Mansour, Mohri, Ro, and
  Suresh]{mansour2020approaches}
Mansour, Y., Mohri, M., Ro, J., and Suresh, A.~T.
\newblock Three approaches for personalization with applications to federated
  learning, 2020.

\bibitem[Marfoq et~al.(2021)Marfoq, Neglia, Bellet, Kameni, and
  Vidal]{marfoq2021federated}
Marfoq, O., Neglia, G., Bellet, A., Kameni, L., and Vidal, R.
\newblock Federated multi-task learning under a mixture of distributions.
\newblock In \emph{Advances in Neural Information Processing Systems
  (NeurIPS)}, 2021.

\bibitem[McMahan et~al.(2017)McMahan, Moore, Ramage, Hampson, and
  Arcas]{mcmahan2017communication}
McMahan, B., Moore, E., Ramage, D., Hampson, S., and Arcas, B. A.~y.
\newblock Communication-efficient learning of deep networks from decentralized
  data.
\newblock In \emph{Proceedings of the International Conference on Artificial
  Intelligence and Statistics (AISTATS)}, 2017.

\bibitem[McMahan et~al.(2018)McMahan, Ramage, Talwar, and
  Zhang]{mcmahan2018learning}
McMahan, H.~B., Ramage, D., Talwar, K., and Zhang, L.
\newblock Learning differentially private recurrent language models.
\newblock In \emph{Proceedings of the International Conference on Learning
  Representations (ICLR)}, 2018.

\bibitem[Nichol et~al.(2018)Nichol, Achiam, and Schulman]{nichol2018firstorder}
Nichol, A., Achiam, J., and Schulman, J.
\newblock On first-order meta-learning algorithms.
\newblock \emph{arXiv preprint arXiv:1803.02999}, 2018.

\bibitem[Noble et~al.(2021)Noble, Bellet, and
  Dieuleveut]{noble2021differentially}
Noble, M., Bellet, A., and Dieuleveut, A.
\newblock Differentially private federated learning on heterogeneous data.
\newblock \emph{arXiv preprint arXiv:2111.09278}, 2021.

\bibitem[Paulik et~al.(2021)Paulik, Seigel, Mason, Telaar, Kluivers, van Dalen,
  Lau, Carlson, Granqvist, Vandevelde, et~al.]{paulik2021federated}
Paulik, M., Seigel, M., Mason, H., Telaar, D., Kluivers, J., van Dalen, R.,
  Lau, C.~W., Carlson, L., Granqvist, F., Vandevelde, C., et~al.
\newblock Federated evaluation and tuning for on-device personalization: System
  design \& applications.
\newblock \emph{arXiv preprint arXiv:2102.08503}, 2021.

\bibitem[Pillaud-Vivien et~al.(2018)Pillaud-Vivien, Rudi, and
  Bach]{pillaud2018statistical}
Pillaud-Vivien, L., Rudi, A., and Bach, F.
\newblock Statistical optimality of stochastic gradient descent on hard
  learning problems through multiple passes.
\newblock \emph{Advances in Neural Information Processing Systems (NeurIPS)},
  2018.

\bibitem[Shen et~al.(2020)Shen, Zhang, Jia, Zhang, Huang, Zhou, Kuang, Wu, and
  Wu]{shen2020federated}
Shen, T., Zhang, J., Jia, X., Zhang, F., Huang, G., Zhou, P., Kuang, K., Wu,
  F., and Wu, C.
\newblock Federated mutual learning.
\newblock \emph{arXiv preprint arXiv:2006.16765}, 2020.

\bibitem[Singhal et~al.(2021)Singhal, Sidahmed, Garrett, Wu, Rush, and
  Prakash]{singhal2021federated}
Singhal, K., Sidahmed, H., Garrett, Z., Wu, S., Rush, J., and Prakash, S.
\newblock Federated reconstruction: Partially local federated learning.
\newblock \emph{Advances in Neural Information Processing Systems (NeurIPS)},
  2021.

\bibitem[Smith et~al.(2017)Smith, Chiang, Sanjabi, and
  Talwalkar]{smith2017federated}
Smith, V., Chiang, C.-K., Sanjabi, M., and Talwalkar, A.
\newblock Federated multi-task learning.
\newblock In \emph{Advances in Neural Information Processing Systems (NIPS)},
  2017.

\bibitem[Vanhaesebrouck et~al.(2017)Vanhaesebrouck, Bellet, and
  Tommasi]{vanhaesebrock2017decentralized}
Vanhaesebrouck, P., Bellet, A., and Tommasi, M.
\newblock Decentralized collaborative learning of personalized models over
  networks.
\newblock In \emph{Proceedings of the International Conference on Artificial
  Intelligence and Statistics (AISTATS)}, 2017.

\bibitem[Wang et~al.(2021)Wang, Charles, Xu, Joshi, McMahan, Al-Shedivat,
  Andrew, Avestimehr, Daly, Data, et~al.]{wang2021field}
Wang, J., Charles, Z., Xu, Z., Joshi, G., McMahan, H.~B., Al-Shedivat, M.,
  Andrew, G., Avestimehr, S., Daly, K., Data, D., et~al.
\newblock A field guide to federated optimization.
\newblock \emph{arXiv preprint arXiv:2107.06917}, 2021.

\bibitem[Wang et~al.(2019)Wang, Mathews, Kiddon, Eichner, Beaufays, and
  Ramage]{wang2019federated}
Wang, K., Mathews, R., Kiddon, C., Eichner, H., Beaufays, F., and Ramage, D.
\newblock Federated evaluation of on-device personalization.
\newblock \emph{arXiv preprint arXiv:1910.10252}, 2019.

\bibitem[Woodworth et~al.(2020)Woodworth, Patel, Stich, Dai, Bullins, Mcmahan,
  Shamir, and Srebro]{woodworth2020local}
Woodworth, B., Patel, K.~K., Stich, S., Dai, Z., Bullins, B., Mcmahan, B.,
  Shamir, O., and Srebro, N.
\newblock Is local sgd better than minibatch sgd?
\newblock In \emph{Proceedings of the International Conference on Machine
  Learning (ICML)}, 2020.

\end{thebibliography}
\bibliographystyle{icml2022}

\newpage
\appendix
\onecolumn

\section{Experiment details}
\label{sec:experiments_appx}

We provide the hyperparameter grids for each dataset below.
Our experiments always optimize the step-size at any fixed iteration. To obtain the plots in Figure~\ref{fig:tradeoffs}, we also optimize over~$\alpha$ to obtain the ``best personalized'' curve, while varying over a finer grid of noise levels, provided below.

\textbf{Synthetic dataset:}
\begin{itemize}
    \item step-size~$\eta$: [0.01, 0.02, 0.05, 0.1, 0.2, 0.4, 0.7, 1., 1.2, 1.5, 1.8],
    \item personalization level~$\alpha Q$: [0, 0.1, 0.3, 1, 3, 10, 30, 100, $\infty$],
    \item noise multiplier~$\sigma$: [0, 0.1, 0.3, 1, 3, 10, 30, 100, 300, 1000].
\end{itemize}

\textbf{Stackoverflow:}
\begin{itemize}
    \item step-size~$\eta$:  [2, 5, 10, 20, 50, 100, 150, 200, 250],
    \item personalization level~$\alpha Q$:  [0, 0.1, 0.3, 1, 3, 10, 30, 100, $\infty$],
    \item noise multiplier~$\sigma$:   [0, 0.01, 0.02, 0.05, 0.1, 0.2, 0.5, 1, 2, 5, 10, 100].
\end{itemize}

\textbf{EMNIST-linear:}
\begin{itemize}
    \item step-size~$\eta$:   [0.0001, 0.0002, 0.0005, 0.001, 0.002, 0.005, 0.01],
    \item personalization level~$\alpha Q$:  [0, 0.1, 0.3, 1, 3, 10, 30, 100, $\infty$],
    \item noise multiplier~$\sigma$: [0, 0.1, 0.3, 1, 2, 3, 5, 10, 20, 30, 50, 100].
\end{itemize}

\textbf{EMNIST-CNN:}
\begin{itemize}
    \item step-size~$\eta$:   [0.05, 0.1, 0.2, 0.5, 1., 2., 5.],
    \item personalization level~$\alpha Q$:  [0, 0.1, 0.3, 1, 3, 10, 30, 100, $\infty$],
    \item noise multiplier~$\sigma$: [0, 0.1, 0.3, 1., 3., 10., 30., 100.].
\end{itemize}

\section{Proofs of Main Results}
\label{sec:proofs}


For proving the privacy guarantees, we make use of the following billboard lemma.
 \begin{lemma}[Billboard Lemma,~\citealp{HsuHRRW14}]
     \label{lemma:billboard}
     Suppose a message broadcasting mechanism $\mathcal{M}:\mathcal{D}\rightarrow\mathcal{W}$ is $(\epsilon,\delta)$-differentially private. Consider any set of functions: $g_i:\mathcal{D}_i\times\mathcal{W}\rightarrow\mathcal{W}'$. The composition $\{g_i(\Pi_i\mathcal{D},\mathcal{M}(\mathcal{D}))\}$ is $(\epsilon,\delta)$-joint differentially private, where $\Pi_i:\mathcal{D}\rightarrow\mathcal{D}_i$ is the projection of $\mathcal{D}$ onto $\mathcal{D}_i$.
 \end{lemma}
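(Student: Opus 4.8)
The plan is to reduce the JDP guarantee (Definition~\ref{def:jointzCDP}) to the standard post-processing property of ordinary DP (Definition~\ref{def:DP}) applied to the single billboard message $\mathcal{M}(D)$. First I would fix an arbitrary user $i$ together with a pair of $i$-neighboring datasets $D, D'$; by definition these differ only in user $i$'s private data, so that $\Pi_j D = \Pi_j D'$ for every $j \neq i$. The object constrained by JDP is $M_{-i}(D) = \{g_j(\Pi_j D, \mathcal{M}(D))\}_{j \neq i}$, namely the tuple of outputs delivered to all users other than~$i$.

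The key observation is that, with the other users' data held fixed, $M_{-i}$ is nothing but a deterministic post-processing of the lone random object $\mathcal{M}(D)$. I would make this precise by defining $h : \mathcal{W} \to (\mathcal{W}')^{N-1}$ via $h(w) := \{g_j(\Pi_j D, w)\}_{j \neq i}$. Because $\Pi_j D = \Pi_j D'$ for all $j \neq i$, the very same map $h$ represents both outputs, giving $M_{-i}(D) = h(\mathcal{M}(D))$ and $M_{-i}(D') = h(\mathcal{M}(D'))$ simultaneously. In words, the only channel through which user $i$'s data can influence what the other users see is the public message.

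I would then invoke the $(\epsilon,\delta)$-DP property of $\mathcal{M}$: since $D$ and $D'$ differ by a single user's data, they are neighbors in the sense of Definition~\ref{def:DP}, so $\Pr[\mathcal{M}(D) \in E] \leq e^\epsilon \Pr[\mathcal{M}(D') \in E] + \delta$ for every event $E \subseteq \mathcal{W}$. The post-processing property --- applying any fixed map to a DP output preserves its $(\epsilon,\delta)$ guarantee --- transfers this to $h$: for every event $F$ in the output space of the other users, $\Pr[h(\mathcal{M}(D)) \in F] \leq e^\epsilon \Pr[h(\mathcal{M}(D')) \in F] + \delta$. Unwinding the definitions, this is exactly $\Pr[M_{-i}(D) \in F] \leq e^\epsilon \Pr[M_{-i}(D') \in F] + \delta$, which is the JDP inequality for user $i$; since $i$ was arbitrary, the lemma follows.

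The step requiring the most care is not difficult but must be stated cleanly: that no information about user $i$'s data enters $M_{-i}$ except through $\mathcal{M}$. This rests on the two structural facts that each $g_j$ with $j \neq i$ reads only user $j$'s own data (through $\Pi_j$) alongside the public message, and that $i$-neighbors leave every $\Pi_j D$ with $j \neq i$ unchanged. If one wished to allow the $g_j$ to be internally randomized, I would additionally assume their randomness is mutually independent and independent of $\mathcal{M}$, so that $h$ becomes an independently randomized post-processing --- which preserves DP by the same standard property --- and the argument goes through verbatim.
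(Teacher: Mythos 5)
Your proposal is correct: the paper states Lemma~\ref{lemma:billboard} as an imported result of \citet{HsuHRRW14} without proof, and your argument---fixing $i$-neighbors $D, D'$, observing that $\Pi_j D = \Pi_j D'$ for all $j \neq i$ so that $M_{-i}$ factors through a single fixed post-processing map $h(\mathcal{M}(\cdot))$ of the DP billboard message, then invoking resilience of $(\epsilon,\delta)$-DP to post-processing---is precisely the standard proof from that reference. Your closing caveat, that internally randomized $g_j$ require randomness independent of $\mathcal{M}$ for the post-processing step to apply, is the right one; nothing is missing.
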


\subsection{Proof of Theorem~\ref{prop:privacy} (Privacy for PP-SGD)}
\label{sub:privacy_proof}

\begin{proof}
To prove the JDP guarantee, by the billboard lemma (Lemma~\ref{lemma:billboard}), it is sufficient to show that the global model updates satisfy standard DP w.r.t. any changes to a single user's data.

With the assumption that gradients are uniformly bounded by~$G$, each update of the global models has $\ell_2$ sensitivity bounded by~$\alpha \eta/N$ w.r.t.~any change to a single user's data.
If we take
\[\sigma_\zeta \geq c \frac{G\sqrt{n \log(1/\delta)}}{N \epsilon},\]
for some absolute constant~$c$,
then by the Gaussian mechanism, each update is $(\frac{\epsilon}{\sqrt{n \log(1/\delta)}}, \delta)$-differentially private, and by the advanced composition theorem~\citep{dwork2010boosting}, the algorithm that outputs global models is~$(\epsilon, \delta)$-differentially private after~$n$ steps.

When considering both local and global models as outputs of the algorithm, combining the above with the billboard lemma yields the desired JDP guarantee.

\end{proof}

\subsection{Proof of Theorem~\ref{thm:convergence} (Generalization for PP-SGD)}
\label{sub:convergence_proof}

\begin{proof}
Note that the algorithm updates be written jointly on~$z_t = (w_t, \theta_t)$ as
\begin{align}
    z_t &= z_{t-1} - \eta H^{-1} g_t, \\
    \text{with }g_t &= \left( \begin{matrix} \frac{1}{N} \sum_{i=1}^N \nabla_w f_i(w_{t-1}, \theta_{i,t-1}, \xi_{i,t}) + \zeta_t \\ \frac{1}{N} \nabla_\theta f_1( w_{t-1}, \theta_{1,t-1}, \xi_{1,t}) \\
    \vdots \\
    \frac{1}{N} \nabla_\theta f_N( w_{t-1}, \theta_{N,t-1}, \xi_{N,t})
    \end{matrix} \right), \label{eq:gt}
\end{align}
where~$H$ is a pre-conditioner of the form
\begin{equation}
\label{eq:Hdef}
H = \begin{pmatrix}
	\frac{1}{\alpha} I_{\dw} & 0 \\
	0 & I_{N \dth}
\end{pmatrix}.
\end{equation}
Note that~$H$ satisties~$\|z\|^2_H = \langle H z, z \rangle = \|z\|^2_\alpha$, as defined in~\eqref{eq:alpha_norm}.

Denote by~$\mathcal F_{t}$ the sigma algebra spanned by random variables up to time~$t$. Note that we have~$\E[g_t | \mathcal F_{t-1}] = \nabla f(z_{t-1})$.
We also define
\begin{align}
    g_t^* = \left( \begin{matrix} \frac{1}{N} \sum_{i=1}^N \nabla_w f_i(w^*, \theta_i^*, \xi_{i,t}) + \zeta_t \\ \frac{1}{N} \nabla_\theta f_1( w^*, \theta_1^*, \xi_{1,t})  \\
    \vdots \\
    \frac{1}{N} \nabla_\theta f_N( w^*, \theta_N^*, \xi_{N,t})
    \end{matrix} \right), \label{eq:gtstar}
\end{align}
which satisfies~$\E[g_t^*] = \nabla f(z^*) = 0$.
We have

\begin{align}
    \E[\|z_t - z^*\|_H^2 | \Fcal_{t-1}] &= \|z_{t-1} - z^*\|_H^2 - 2 \eta \langle H \E[H^{-1}g_t | \Fcal_{t-1}], z_{t-1} - z^* \rangle + \eta^2 \E[ \|H^{-1} g_t\|_H^2 | \Fcal_{t-1}] \nonumber \\
    &= \|z_{t-1} - z^*\|_H^2 - 2 \eta \langle \nabla f(z_{t-1}), z_{t-1} - z^* \rangle + \eta^2 \E[ \|g_t\|_{H^{-1}}^2 | \Fcal_{t-1}] \nonumber \\
    &\leq \|z_{t-1} - z^*\|_H^2 - 2 \eta (f(z_{t-1}) - f(z^*)) + 2\eta^2 \E[ \|g_t - g_t^*\|_{H^{-1}}^2 | \Fcal_{t-1}] + 2 \eta^2 \E[\|g_t^*\|_{H^{-1}}^2] \\
    &\leq \|z_{t-1} - z^*\|_H^2 - 2 \eta (1 - 2\eta L_\alpha) (f(z_{t-1}) - f(z^*)) + 2 \eta^2\sigma_{tot,\alpha}^2, \label{eq:ztrec}
\end{align}
where the first inequality follows by convexity of~$f$ and using $\|a + b\|^2 \leq 2(\|a\|^2 + \|b\|^2)$, while the second uses Lemma~\ref{lem:smoothness} below to bound~$\E[ \|g_t - g_t^*\|_{H^{-1}}^2 | \Fcal_{t-1}]$, as well as the relation
\begin{align*}
\E[\|g_t^*\|_{H^{-1}}^2]
    &= \alpha \E\left[\left\|\frac{1}{N}\sum_{i=1}^N \nabla_w f_i(w^*, \theta_i^*, \xi_t) + \zeta_t\right\|^2 \right] + \sum_{i=1}^N \E\left[\left\| \frac{1}{N} \nabla_\theta f_i(w^*, \theta_i^*, \xi_t) \right\|^2 \right] \\
    &= \frac{\alpha \bar \sigma_w^2 + \bar \sigma_\theta}{N} + \alpha \dw \sigma_\zeta^2 = \sigma_{tot,\alpha}^2.
\end{align*}
Note that this quantity matches the definition in~\eqref{eq:sigma_tot}.

Assuming~$\eta \leq 1/4L_\alpha$, and taking total expectations, \eqref{eq:ztrec} yields
\[ 
\E[\|z_{t-1} - z^*\|_H^2] \leq \E[\|z_{t-1} - z^*\|_H^2] - \eta \E[f(z_{t-1}) - f(z^*)] + 2 \eta^2 \sigma_{tot,\alpha}^2.
\]
Summing this inequality from~$t=0$ to~$t=n$, we obtain
\begin{align}
    \E[f(\bar z_n) - f(z^*)] \leq \frac{ \|z_0 - z^*\|_\alpha^2}{\eta n} + 2\eta \sigma_{tot,\alpha}^2.
\end{align}
Taking~$\eta = \min\{\frac{1}{4L_\alpha}, \frac{\|z^*\|_\alpha}{\sqrt{n} \sigma_{tot,\alpha}}\}$, with~$z_0 = 0$ yields the desired bound.

\end{proof}

\begin{lemma}
\label{lem:smoothness}
Let $g_t$ and $g_t^*$ be defined as in~\eqref{eq:gt} and~\eqref{eq:gtstar}.
We have
\begin{equation}
    \E[\|g_t - g_t^*\|_{H^{-1}}^2] \leq 2 L_\alpha (f(z_{t-1}) - f(z_t^*)),
\end{equation}
with~$L_\alpha := L \max(\alpha, \frac{1}{N})$.
\end{lemma}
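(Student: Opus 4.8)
The plan is to expand the $H^{-1}$-weighted norm of $g_t - g_t^*$ block by block, exploit the cancellation of the privacy noise, and then reduce everything to a per-user smoothness (co-coercivity) bound whose aggregate recovers the Bregman divergence of $f$ at $z^*$.

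First I would observe that in the difference $g_t - g_t^*$ the shared noise $\zeta_t$ cancels, so the global block equals $\frac{1}{N}\sum_i d_{w,i}$ and the $i$-th local block equals $\frac{1}{N} d_{\theta,i}$, where $d_{w,i} := \nabla_w f_i(w_{t-1}, \theta_{i,t-1}, \xi_{i,t}) - \nabla_w f_i(w^*, \theta_i^*, \xi_{i,t})$ and $d_{\theta,i}$ is defined analogously for the $\theta$-gradient. Since $H^{-1} = \diag(\alpha I_{\dw},\, I_{N\dth})$, this yields
\[
\|g_t - g_t^*\|_{H^{-1}}^2 = \alpha \Bigl\| \tfrac{1}{N}\sum_i d_{w,i} \Bigr\|^2 + \tfrac{1}{N^2}\sum_i \|d_{\theta,i}\|^2.
\]
Next I would apply Jensen's inequality $\bigl\|\frac{1}{N}\sum_i d_{w,i}\bigr\|^2 \le \frac{1}{N}\sum_i \|d_{w,i}\|^2$ to the first term, and then bound both prefactors ($\alpha/N$ and $1/N^2$) by $\frac{1}{N}\max(\alpha, \frac{1}{N})$, giving
\[
\|g_t - g_t^*\|_{H^{-1}}^2 \le \tfrac{1}{N}\max\Bigl(\alpha, \tfrac{1}{N}\Bigr) \sum_i \bigl( \|d_{w,i}\|^2 + \|d_{\theta,i}\|^2 \bigr).
\]
Here the summand $\|d_{w,i}\|^2 + \|d_{\theta,i}\|^2$ is exactly the squared norm of the difference of the \emph{joint} gradients of $z \mapsto f_i(w, \theta_i, \xi_{i,t})$ evaluated at $(w_{t-1}, \theta_{i,t-1})$ and at $(w^*, \theta_i^*)$.

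The key step is then the standard co-coercivity inequality for a convex $L$-smooth function $h$ (guaranteed by Assumption~\ref{ass:conv_smooth}), namely $\|\nabla h(x) - \nabla h(y)\|^2 \le 2L\bigl(h(x) - h(y) - \langle \nabla h(y), x - y\rangle\bigr)$, applied per user to $h = f_i(\cdot,\cdot,\xi_{i,t})$. I would then take the conditional expectation $\E[\cdot \mid \Fcal_{t-1}]$ — which is the form in which the lemma is actually invoked in~\eqref{eq:ztrec} — so that the fresh sample $\xi_{i,t}$ averages out, turning $f_i(\cdot,\xi_{i,t})$ into the population risk $f_i$ and the linear term into $\langle \nabla f_i(w^*, \theta_i^*),\, (w_{t-1}-w^*, \theta_{i,t-1}-\theta_i^*)\rangle$. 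Summing over $i$ and dividing by $N$, the risk terms assemble into $f(z_{t-1}) - f(z^*)$ and the linear terms assemble into $\langle \nabla f(z^*),\, z_{t-1} - z^*\rangle$, which vanishes because $z^*$ minimizes $f$. The surviving prefactor is $L\max(\alpha,\frac{1}{N}) = L_\alpha$, completing the bound.

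The place requiring the most care — and the main obstacle — is the bookkeeping that collapses the mismatched weights ($\alpha/N$ on the global block and $1/N^2$ on each local block) into the single scalar $\max(\alpha,\frac{1}{N})$; getting this constant right is precisely what links the geometry encoded by $H$ to the smoothness constant $L$. I would also verify that co-coercivity is applied to the function jointly in $(w,\theta_i)$ rather than separately in each argument, and confirm that the cross terms genuinely recombine into $\langle \nabla f(z^*), z_{t-1}-z^*\rangle$ before using $\nabla f(z^*) = 0$.
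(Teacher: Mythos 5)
Your proof is correct and follows essentially the same route as the paper's: the same noise cancellation and block decomposition of the $H^{-1}$-norm, the same Jensen step collapsing the prefactors $\alpha/N$ and $1/N^2$ into $\frac{1}{N}\max\bigl(\alpha,\frac{1}{N}\bigr)$, the same per-user co-coercivity applied jointly in $(w,\theta_i)$, and the same conditional expectation turning the sampled losses into population risks with the linear term vanishing via $\nabla f(z^*)=0$. As a minor aside, your sign on the co-coercivity linear term ($-\langle \nabla h(y), x-y\rangle$) is the correct one; the paper's proof writes it with a $+$, a harmless typo since the term is annihilated in expectation either way.
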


\begin{proof}
   Define $\tilde f_i(w, \theta_i) := f_i(w,\theta_i, \xi_{i,t})$ and $\tilde f(z) := \frac{1}{N}\sum_{i=1}^N \tilde f_i(w, \theta_i)$, which satisfies~$\E[\tilde f(z)] = f(z)$ and~$\E[\nabla \tilde f(z)] = \nabla f(z)$. 
   By $L$-smoothness (Assumption~\ref{ass:conv_smooth}), we have for every $i$, 
   \begin{align*}
       &\|\nabla \tilde f_i(w_{t-1}, \theta_{i,t-1})-\nabla \tilde f_i(w^*, \theta_{i}^*)\|^2 \\
       &\leq 2L\left(\tilde f_i(w_{t-1}, \theta_{i,t-1}) - \tilde f_i(w^*, \theta_i^*) + \langle \nabla_w \tilde f_i(w^*, \theta_i^*), w_{t-1}-w^* \rangle + \langle \nabla_\theta \tilde f_i(w^*, \theta_i^*), \theta_{t-1}-\theta^* \rangle\right) 
   \end{align*}
   Taking the average over $i$, we get 
   \begin{align*}
       \frac{1}{N}\sum_{i=1}^N  \|\nabla \tilde f_i(w_{t-1}, \theta_{i,t-1})-\nabla \tilde f_i(w^*, \theta_{i}^*)\|^2 \leq 2L \left(\tilde f(z_{t-1}) - \tilde f(z^*) + \langle \nabla \tilde f(z^*), z_{t-1} - z^*   \rangle\right). 
   \end{align*}
   Thus, 
   \begin{align*}
       &\|g_t - g_t^*\|_{H^{-1}}^2\\ 
       &= \alpha \left\|\frac{1}{N}\sum_{i=1}^N \left(\nabla_w \tilde f_i(w_{t-1}, \theta_{i,t-1})-\nabla_w \tilde f_i(w^*, \theta_{i}^*)\right) \right\|^2 + \sum_{i=1}^N \left\|\frac{1}{N} \left(\nabla_\theta \tilde f_i(w_{t-1}, \theta_{i,t-1})-\nabla_\theta \tilde f_i(w^*, \theta_{i}^*)\right)\right\|^2 \\
       &\leq \frac{\alpha}{N}\sum_{i=1}^N \left\|\nabla_w \tilde f_i(w_{t-1}, \theta_{i,t-1})-\nabla_w \tilde f_i(w^*, \theta_{i}^*)\right\|^2 + \frac{1}{N^2}\sum_{i=1}^N  \left\|\nabla_\theta \tilde f_i(w_{t-1}, \theta_{i,t-1})-\nabla_\theta \tilde f_i(w^*, \theta_{i}^*)\right\|^2 \tag{by Cauchy-Schwarz inequality} \\
       &\leq \max\left(\alpha, \frac{1}{N}\right)\times \frac{1}{N}\sum_{i=1}^N  \|\nabla \tilde f_i(w_{t-1}, \theta_{i,t-1})-\nabla \tilde f_i(w^*, \theta_{i}^*)\|^2 \\
       &\leq 2L_\alpha \left(\tilde f(z_{t-1}) - \tilde f(z^*) + \langle \nabla \tilde f(z^*), z_{t-1} - z^*   \rangle\right).
   \end{align*}
   with $L_\alpha := L\max(\alpha, \frac{1}{N})$. 
   Taking the expectation, we get
   \begin{align*}
       \E\left[ \|g_t - g_t^*\|_{H^{-1}}^2|\Fcal_{t-1}\right] \leq 2L_\alpha \left(f(z_{t-1}) - f(z^*) + \langle \nabla  f(z^*), z_{t-1} - z^*   \rangle\right) = 2L_\alpha \left(f(z_{t-1}) - f(z^*) \right). 
   \end{align*}
\end{proof}

\subsection{Proof of Lemma~\ref{lem: effect alpha} (effect of~$\alpha$ and critical sample size)}

\begin{proof}
   We would like to make $(w+\theta_i)^\top x=v^\top x$ for all $x$, while minimizing $\|(w, \theta_{1:N})\|_\alpha$. This is achieved by solving the following minimization problem: 
   \begin{align*}
       \min_{w, \theta_{1:N}} \quad & \frac{1}{\alpha}\|w\|^2 + \sum_i \|\theta_i\|^2 \\
       s.t. \quad & w + \theta_i=v 
   \end{align*}
   To solve $w$, we minimize $\frac{1}{\alpha}\|w\|^2 + \sum_i \|v-w\|^2 = \frac{1}{\alpha}\|w\|^2 + N\|v-w\|^2$, which gives $w=\frac{\alpha N}{\alpha N+1}v$ and thus $\theta_i=\frac{1}{\alpha N+1}v$ for all $i$. 
   
   Therefore, 
   \begin{align*}
       \|z^*\|_{\alpha} = \sqrt{\frac{1}{\alpha}\left(\frac{\alpha N}{\alpha N+1}\right)^2 \|v\|^2 + N\left(\frac{1}{\alpha N+1}\right)^2\|v\|^2} = \sqrt{\frac{N}{\alpha N+1}}\|v\|. 
   \end{align*}
   Plugging this into \eqref{eq:excess_risk}, we see that the variance term takes the form specified in the lemma statement. 
   
   To see the effect of $\alpha$ to the variance term, we identify the condition for $\alpha$, such that the following function is increasing in $\alpha$: 
   \begin{align*}
       \psi(\alpha) = \frac{a(\alpha+1)}{\alpha N+1} + \frac{b\alpha}{\alpha N+1}
   \end{align*}
   for constants $a,b$. 
   Its derivative can be calculated as follows: 
   \begin{align*}
       \psi'(\alpha) = \frac{a(\alpha N+1)-a(\alpha+1)N + b(\alpha N+1) - b\alpha N}{(\alpha N+1)^2} = \frac{a(1-N)+b}{(\alpha N+1)^2}. 
   \end{align*}
   Thus, $\psi(\alpha)$ is increasing in $\alpha$ if and only if $a(N-1)\leq b$. In our case, $a=\frac{\sigma^2}{n}$ and $b= c \frac{d_w G^2\log(1/\delta)}{N\epsilon}$, and the condition $a(N-1)\leq b$ is equivalent to 
   \begin{align*}
       n\geq \frac{(N^2-N)\sigma^2\epsilon}{c d_wG^2\log(1/\delta)}. 
   \end{align*}
\end{proof}

\subsection{Proof of Theorem~\ref{prop:privacy_sampling} (privacy of PPSGD with client sampling)}
\label{sub:privacy_sampling_proof}

\begin{proof}
The proof follows the same lines as that of Proposition~\ref{prop:privacy}, but additionally leverages a standard argument of privacy amplification by subsampling to accomodate user sampling. Specifically, we use~\citep[Theorem 1]{abadi2016deep}, which leverages Rényi differential privacy to provide an improved dependency on~$\delta$ compared to the advanced composition theorem~\citep{dwork2010boosting}.

In order to apply~\citep[Theorem 1]{abadi2016deep}, we rewrite the global updates as
\[
w_t = w_{t-1} - \frac{\alpha \eta}{qM} \left(\sum_{i:b_{i,t}=1} \tilde g_{w,i}^t + qM \zeta_t \right).
\]
This matches the form of the the updates in~\citep{abadi2016deep}, with total noise standard deviation~$q M \sigma_\zeta$, which should equal~$\sigma C$ in their notations. Then, by~\citet[Theorem 1]{abadi2016deep}, their condition on~$\sigma$ becomes:
\[
\frac{\sigma_\zeta q M}{C} \geq c_2 \frac{q\sqrt{T \log(1/\delta)}}{\epsilon}.
\]
This guarantees that the global update mechanism after~$T$ iterations is~$(\epsilon, \delta)$-DP. This corresponds to the condition in the statement.
Combining this with the billboard lemma (Lemma~\ref{lemma:billboard}) yields the desired JDP guarantee on the full algorithm~output.

\end{proof}

\subsection{Proof of Theorem~\ref{thm:convergence_sampling} (Generalization of PP-SGD with client sampling)}

\begin{proof}
Note that the algorithm updates be written jointly on~$z_t = (w_t, \theta_t)$ as
\begin{align}
    z_t &= z_{t-1} - \eta H^{-1} g_t, \\
    \text{with }g_t &=  \frac{1}{qM} \left( \begin{matrix}  \sum_{i=1}^N b_{i,t}\sum_{k=1}^{m_i} \nabla_w f_i(w_{t-1}, \theta_{i,t-1}, \xi_{i,t}^{(k)}) + qM \zeta_t \\ b_{1,t} \sum_{k=1}^{m_1}\nabla_\theta f_1( w_{t-1}, \theta_{1,t-1}, \xi_{1,t}^{(k)}) \\
    \vdots \\
    b_{N,t} \sum_{k=1}^{m_N} \nabla_\theta f_N( w_{t-1}, \theta_{N,t-1}, \xi_{N,t}^{(k)})
    \end{matrix} \right), \label{eq:gt_q} \\
\end{align}
with~$H$ as in~\eqref{eq:Hdef}. We have~$\E[g_t | \Fcal_{t-1}] = \nabla f^m(z_{t-1})$.
We also define
\begin{align}
    g_t^* =\frac{1}{qM}  \left( \begin{matrix} \sum_{i=1}^N b_{i,t}\sum_{k=1}^{m_i} \nabla_w f_i(w^*, \theta_i^*, \xi_{i,t}^{(k)}) +  qM \zeta_t  \\
    b_{1,t} \sum_{k=1}^{m_1} \nabla_\theta f_1( w^*, \theta_1^*, \xi_{1,t}^{(k)})  \\
    \vdots \\
    b_{N,t} \sum_{k=1}^{m_N} \nabla_\theta f_N( w^*, \theta_N^*, \xi_{N,t}^{(k)})
    \end{matrix} \right), \label{eq:gtstar_q}
\end{align}
which satisfies~$\E[g_t^*] = \nabla f^m(z^*) = 0$.
We have
\begin{align}
    \E[\|z_t - z^*\|_H^2 | \Fcal_{t-1}] &= \|z_{t-1} - z^*\|_H^2 - 2 \eta \langle H \E[H^{-1} g_t | \Fcal_{t-1}], z_{t-1} - z^* \rangle + \eta^2 \E[ \|H^{-1} g_t\|_H^2 | \Fcal_{t-1}] \nonumber \\
    &= \|z_{t-1} - z^*\|_H^2 - 2 \eta \langle \nabla f^m(z_{t-1}), z_{t-1} - z^* \rangle + \eta^2 \E[ \|g_t\|_{H^{-1}}^2 | \Fcal_{t-1}] \nonumber \\
    &\leq \|z_{t-1} - z^*\|_H^2 - 2 \eta (f^m(z_{t-1}) - f^m(z^*)) + 2\eta^2 \E[ \|g_t - g_t^*\|_{H^{-1}}^2 | \Fcal_{t-1}] + 2 \eta^2 \E[\|g_t^*\|_{H^{-1}}^2].\label{eq:ztrec_q}
\end{align}
By Lemma~\ref{lem:smoothness subsampled}, 
\begin{align}
    \E[ \|g_t - g_t^*\|_{H^{-1}}^2 | \Fcal_{t-1}] \leq 2L_{m,\alpha} \left(f^m(z_{t-1}) - f^m(z^*) \right)  
\end{align}
where $L_{m,\alpha} := L\max\left( \alpha + \frac{\alpha m_{\max}}{qM}, \frac{m_{\max}}{qM}\right)$, 
and
\begin{align}
\E[\|g_t^*\|_{H^{-1}}^2]
    &= \alpha \E\left[\left\|\frac{1}{qM}\sum_{i=1}^N b_{i,t} \sum_{k=1}^{m_i} \nabla_w f_i(w^*, \theta_i^*, \xi_{i,t}^{(k)}) + \zeta_t\right\|^2 \right] + \sum_{i=1}^N \E\left[\left\| \frac{b_{i,t}}{qM} \sum_{k=1}^{m_i} \nabla_\theta f_i(w^*, \theta_i^*, \xi_{i,t}^{(k)}) \right\|^2 \right]\nonumber \\
    &=  \frac{\alpha}{q^2 M^2}\sum_{i=1}^N \E\left[\left\| b_{i,t} \sum_{k=1}^{m_i}\nabla_w f_i(w^*, \theta_i^*, \xi_{i,t}^{(k)}) - q m_i \nabla_w f_i(w^*, \theta_i^*) \right\|^2 \right]+ \frac{1}{qM^2} \sum_{i=1}^N m_i \sigma_{\theta,i}^2 + \alpha \dw \sigma_\zeta^2 \nonumber\\
    &=  \frac{\alpha}{q^2 M^2}\sum_{i=1}^N \E\left[\left\| b_{i,t} \sum_{k=1}^{m_i}\nabla_w f_i(w^*, \theta_i^*, \xi_{i,t}^{(k)}) - b_{i,t} m_i \nabla_w f_i(w^*, \theta_i^*) \right\|^2 \right] \\
    &\qquad + \frac{\alpha}{q^2 M^2}\sum_{i=1}^N \E\left[\left\| (b_{i,t} - q )m_i \nabla_w f_i(w^*, \theta_i^*) \right\|^2 \right] + \frac{1}{qM^2} \sum_{i=1}^N m_i \sigma_{\theta,i}^2 + \alpha \dw \sigma_\zeta^2 \nonumber\\
    &\leq \frac{\alpha }{qM^2}\sum_{i=1}^N m_{i} \sigma_{w,i}^2  + \frac{\alpha}{qM^2}\sum_{i=1}^N m_i^2\E[(b_{i,t}-q)^2]\left\|\nabla_w f_i(w^*, \theta_i^*)\right\|^2 + \frac{\bar \sigma_{\theta,q}^2}{qM} + \alpha \dw \sigma_\zeta^2 \nonumber\\
    &\leq \frac{\alpha \bar\sigma_{w,m}^2 + \bar \sigma_{\theta,m}^2}{qM} + \alpha \dw \sigma_\zeta^2 + \frac{\alpha q(1-q)}{qM^2}\sum_{i=1}^N m_i^2\left\|\nabla_w f_i(w^*, \theta_i^*)\right\|^2 \label{eq:sigma_tot_q}\\
    &= \sigma_{m,\alpha}^2.   \nonumber
\end{align}

Assuming~$\eta \leq 1/4L_{m,\alpha}$, and taking total expectations, \eqref{eq:ztrec_q} yields
\[
\E[\|z_{t-1} - z^*\|_H^2] \leq \E[\|z_{t-1} - z^*\|_H^2] - \eta \E[f^m(z_{t-1}) - f^m(z^*)] + 2 \eta^2 \sigma_{m,\alpha}^2.
\]
Summing this inequality from~$t=0$ to~$t=T$, we obtain
\begin{align}
    \E[f^m(\bar z_T) - f^m(z^*)] \leq \frac{ \|z_0 - z^*\|_\alpha^2}{\eta T} + 2\eta \sigma_{m,\alpha}^2.
\end{align}

Taking $\eta = \min\{\frac{1}{4L_{m,\alpha}}, \frac{\|z^*\|_\alpha}{\sqrt{T} \sigma_{m,\alpha}}\}$, with~$z_0 = 0$ yields the desired bound. 
\end{proof}

\begin{lemma}
\label{lem:smoothness subsampled}
Let $g_t$ and $g_t^*$ be defined as in~\eqref{eq:gt_q} and~\eqref{eq:gtstar_q}.
We have
\begin{equation}
    \E[\|g_t - g_t^*\|_{H^{-1}}^2] \leq 2L_{m,\alpha}\left(f^m(z_{t-1}) - f^m(z^*)\right)  
\end{equation}
with $L_{m,\alpha}:=L\max\left( \alpha + \frac{\alpha m_{\max}}{qM}, \frac{m_{\max}}{qM}\right)$. 
\end{lemma}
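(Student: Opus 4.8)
The plan is to mirror the structure of the proof of Lemma~\ref{lem:smoothness}, which established the analogous smoothness-to-suboptimality bound in the full-participation case, and adapt it to handle both the random client-participation indicators~$b_{i,t}$ and the minibatch sums. First I would introduce the shorthand $\tilde f_i(w, \theta_i) := \sum_{k=1}^{m_i} f_i(w, \theta_i, \xi_{i,t}^{(k)})$ for the minibatch sum, so that $g_t - g_t^*$ has $w$-block $\frac{1}{qM}\sum_i b_{i,t}(\nabla_w \tilde f_i(w_{t-1},\theta_{i,t-1}) - \nabla_w \tilde f_i(w^*,\theta_i^*))$ and $\theta_i$-block $\frac{b_{i,t}}{qM}(\nabla_\theta \tilde f_i(\cdots) - \nabla_\theta \tilde f_i(w^*,\theta_i^*))$. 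Expanding $\|g_t - g_t^*\|_{H^{-1}}^2$ then gives the $\alpha$-weighted $w$-term plus the unweighted $\theta$-terms, exactly as in the existing lemma.

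The key steps are as follows. On the $w$-block I would apply Cauchy--Schwarz (equivalently, Jensen), but now over the random set $\{i : b_{i,t}=1\}$; since $b_{i,t}\in\{0,1\}$, I have $b_{i,t}^2 = b_{i,t}$ and $\|\sum_i b_{i,t} a_i\|^2 \le (\sum_i b_{i,t})\sum_i b_{i,t}\|a_i\|^2$, but it is cleaner to bound $\sum_i b_{i,t} \le N$ only loosely; instead I expect the $m_{\max}$ factor to enter through the minibatch sum $\|\nabla \tilde f_i\|^2 \le m_i \sum_k \|\nabla f_i(\cdot,\xi^{(k)})\|^2$ via Cauchy--Schwarz over the $m_i$ samples. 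The crucial per-sample ingredient is the self-bounding consequence of $L$-smoothness and convexity, $\|\nabla f_i(w_{t-1},\theta_{i,t-1},\xi) - \nabla f_i(w^*,\theta_i^*,\xi)\|^2 \le 2L(f_i(w_{t-1},\theta_{i,t-1},\xi) - f_i(w^*,\theta_i^*,\xi) + \langle \nabla f_i(w^*,\theta_i^*,\xi), z_{t-1}-z^*\rangle)$, identical to the step in Lemma~\ref{lem:smoothness}. After summing over $k$ and $i$ and collecting the $\alpha$-weighted and unweighted blocks into a single $\max$, I would identify the prefactor $L_{m,\alpha} = L\max(\alpha + \frac{\alpha m_{\max}}{qM}, \frac{m_{\max}}{qM})$. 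Finally, taking the conditional expectation over $b_{i,t}$ (contributing a factor $q$ that cancels against the $\frac{1}{qM}$ normalization) and over the samples $\xi^{(k)}_{i,t}$ turns the right-hand side into $2L_{m,\alpha}(f^m(z_{t-1}) - f^m(z^*) + \langle \nabla f^m(z^*), z_{t-1}-z^*\rangle)$, and the last inner-product vanishes because $z^*$ minimizes $f^m$ so $\nabla f^m(z^*)=0$.

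The main obstacle I anticipate is bookkeeping the constant correctly through the interaction of three sources of averaging: the Bernoulli participation $b_{i,t}$ (expectation $q$), the minibatch Cauchy--Schwarz (factor $m_i \le m_{\max}$), and the $\frac{1}{qM}$ normalization, while ensuring the $\alpha$-weighting of the $w$-block and the $\frac{1}{(qM)^2}$ weighting of each $\theta_i$-block combine into precisely the stated $L_{m,\alpha}$ rather than a looser constant. The subtlety is that $b_{i,t}$ appears \emph{inside} the norm before expectation, so I must be careful to apply Cauchy--Schwarz on the realized participating set and only take $\E_{b}[\,\cdot\,]$ after the self-bounding inequality, at which point $\E[b_{i,t}] = q$ produces the cancellation; mishandling the order would either lose the $q$ cancellation or produce the wrong $m_{\max}$ dependence. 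Once the ordering is fixed, the remaining manipulations are routine and parallel the full-participation lemma exactly.
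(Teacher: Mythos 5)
Your scaffolding matches the paper's proof: the per-sample self-bounding inequality
$\|\nabla f_i(w_{t-1},\theta_{i,t-1},\xi)-\nabla f_i(w^*,\theta_i^*,\xi)\|^2 \le 2L(\cdots)$, the weighted summation that turns the right-hand side into $f^m(z_{t-1})-f^m(z^*)$ after expectation, the minibatch Cauchy--Schwarz producing the $m_{\max}$ factor, the $\theta$-blocks handled via $b_{i,t}^2=b_{i,t}$ and $\E[b_{i,t}]=q$, and the final appeal to $\nabla f^m(z^*)=0$ are all exactly the paper's steps. However, your treatment of the $w$-block has a genuine gap. Writing $\Delta_{w,i}^{(k)} := \nabla_w f_i(w_{t-1},\theta_{i,t-1},\xi_{i,t}^{(k)})-\nabla_w f_i(w^*,\theta_i^*,\xi_{i,t}^{(k)})$ and $a_i := \sum_{k=1}^{m_i}\Delta_{w,i}^{(k)}$, your plan (pathwise Cauchy--Schwarz over the realized participating set, or the looser $\sum_i b_{i,t}\le N$) gives
\begin{align*}
\E\Bigl[\Bigl\|\tfrac{1}{qM}\textstyle\sum_i b_{i,t} a_i\Bigr\|^2\Bigr]
\;\le\; \frac{N}{q^2M^2}\,\E\Bigl[\textstyle\sum_i b_{i,t}\|a_i\|^2\Bigr]
\;=\; \frac{N}{qM^2}\textstyle\sum_i \E\|a_i\|^2
\;\le\; \frac{N m_{\max}}{qM}\cdot\frac{1}{M}\textstyle\sum_{i,k}\E\bigl\|\Delta_{w,i}^{(k)}\bigr\|^2,
\end{align*}
so the $w$-block coefficient becomes $\alpha\frac{Nm_{\max}}{qM}$ rather than the required $\alpha\bigl(1+\frac{m_{\max}}{qM}\bigr)$ --- looser by up to a factor of $N$. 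Concretely, with $m_i\equiv 1$ and $q=1/N$ (the one-user-per-round regime the paper highlights after Theorem~\ref{thm:convergence_sampling}), your constant is $\alpha N$ versus the stated $2\alpha$; this inflation of $L_{m,\alpha}$ enters the bias term $L_{m,\alpha}\|z^*\|_\alpha^2/T$ and destroys the claimed $1/(Nn)$ bias rate. A sharp diagnostic: any Cauchy--Schwarz applied \emph{across users} discards the cross terms $\langle a_i,a_j\rangle$ whose cancellation makes $\|\frac1M\sum_i a_i\|^2$ small, so it cannot recover the full $1/M$ averaging even at $q=1$, where the sampling should contribute nothing.

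The missing idea is the paper's mean--fluctuation decomposition of the Bernoulli weights \emph{before} any norm inequality: write $\frac{b_{i,t}}{q} = 1 + \frac{b_{i,t}-q}{q}$. Since the $b_{i,t}-q$ are zero-mean, independent across $i$, and independent of the fresh samples and of $\Fcal_{t-1}$, the cross terms vanish and
\begin{align*}
\E\Bigl[\Bigl\|\tfrac{1}{qM}\textstyle\sum_i b_{i,t} a_i\Bigr\|^2\Bigr]
= \E\Bigl[\Bigl\|\tfrac{1}{M}\textstyle\sum_i a_i\Bigr\|^2\Bigr]
+ \textstyle\sum_i \frac{q(1-q)}{q^2M^2}\,\E\|a_i\|^2.
\end{align*}
The mean part is bounded by Jensen over all $M$ per-sample differences (coefficient $\alpha\cdot 1$, with no user-level loss), and only the variance part pays the subsampling penalty, $\le \frac{(1-q)m_{\max}}{qM}\cdot\frac1M\sum_{i,k}\E\|\Delta_{w,i}^{(k)}\|^2$ after the minibatch Cauchy--Schwarz, yielding exactly $\alpha+\frac{\alpha m_{\max}}{qM}$ (the paper drops the benign $1-q$). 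With this one change --- which your proposal never invokes, as it commits to taking $\E_b$ only after a pathwise inequality --- the rest of your argument goes through as planned and recovers the stated $L_{m,\alpha}$.
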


\begin{proof}
   Define $\tilde f_i^{(k)}(w, \theta_i) := f_i(w,\theta_i, \xi_{i,t}^{(k)})$ and $\tilde f(z) := \frac{1}{qM}\sum_{i=1}^N b_{i,t} \sum_{k=1}^{m_i}\tilde f_i^{(k)}(w, \theta_i)$, which satisfies~$\E[\tilde f(z)] = f^m(z)$ and~$\E[\nabla \tilde f(z)] = \nabla f^m(z)$. 
   By $L$-smoothness (Assumption~\ref{ass:conv_smooth}), we have for every $i, k$, 
   \begin{align*}
       &\|\nabla \tilde f_i^{(k)}(w_{t-1}, \theta_{i,t-1})-\nabla \tilde f_i^{(k)}(w^*, \theta_{i}^*)\|^2 \\
       &\leq 2L\left(\tilde f_i^{(k)}(w_{t-1}, \theta_{i,t-1}) - \tilde f_i^{(k)}(w^*, \theta_i^*) + \langle \nabla_w \tilde f_i^{(k)}(w^*, \theta_i^*), w_{t-1}-w^* \rangle + \langle \nabla_\theta \tilde f_i^{(k)}(w^*, \theta_i^*), \theta_{t-1}-\theta^* \rangle\right) 
   \end{align*}
   Taking the weighted sum over $i,k$ with weights $\frac{b_{i,t}}{qM}$, we get 
   \begin{align*}
       \frac{1}{qM}\sum_{i=1}^N b_{i,t} \sum_{k=1}^{m_i} \|\nabla \tilde f_i^{(k)}(w_{t-1}, \theta_{i,t-1})-\nabla \tilde f_i^{(k)}(w^*, \theta_{i}^*)\|^2 \leq 2L \left(\tilde f(z_{t-1}) - \tilde f(z^*) + \langle \nabla \tilde f(z^*), z_{t-1} - z^*   \rangle\right). 
   \end{align*}
   Taking the expectation on two sides, we further get 
   \begin{align*}
       \frac{1}{M}\sum_{i=1}^N \sum_{k=1}^{m_i} \E\left[\|\nabla \tilde f_i^{(k)}(w_{t-1}, \theta_{i,t-1})-\nabla \tilde f_i^{(k)}(w^*, \theta_{i}^*)\|^2\right] &\leq 2L \left(f^m(z_{t-1}) - f^m(z^*) + \langle \nabla  f^m(z^*), z_{t-1} - z^*   \rangle\right) \\
       &= 2L \left(f^m(z_{t-1}) - f^m(z^*)\right).  
   \end{align*}
   Thus, 
   \begin{align*}
       &\E\left[\|g_t - g_t^*\|_{H^{-1}}^2\right]\\ 
       &= \alpha \E\left[\left\|\sum_{i=1}^N \frac{b_{i,t}}{qM}\sum_{k=1}^{m_i}\left(\nabla_w \tilde f_i^{(k)}(w_{t-1}, \theta_{i,t-1})-\nabla_w \tilde f_i^{(k)}(w^*, \theta_{i}^*)\right) \right\|^2\right] \\
       &\qquad + \sum_{i=1}^N \E\left[\left\|\frac{b_{i,t}}{qM} \sum_{k=1}^{m_i}\left(\nabla_\theta \tilde f_i^{(k)}(w_{t-1}, \theta_{i,t-1})-\nabla_\theta \tilde f_i^{(k)}(w^*, \theta_{i}^*)\right)\right\|^2\right] \\
       &= \alpha\E\left[\left\|\sum_{i=1}^N \frac{1}{M} \sum_{k=1}^{m_i} \left(\nabla_w \tilde f_i^{(k)}(w_{t-1}, \theta_{i,t-1})-\nabla_w \tilde f_i^{(k)}(w^*, \theta_{i}^*)\right) \right\|^2\right] \\
       &\qquad + \alpha\E\left[ \sum_{i=1}^N \frac{(b_{i,t}-q)^2}{q^2 M^2}\left\|\sum_{k=1}^{m_i}\left(\nabla_w \tilde f_i^{(k)}(w_{t-1}, \theta_{i,t-1})-\nabla_w \tilde f_i^{(k)}(w^*, \theta_{i}^*)\right)\right\|^2 \right] \\
       &\qquad + \sum_{i=1}^N \frac{1}{qM^2} \E\left[\left\|\sum_{k=1}^{m_i} \left(\nabla_\theta \tilde f_i^{(k)}(w_{t-1}, \theta_{i,t-1})-\nabla_\theta \tilde f_i^{(k)}(w^*, \theta_{i}^*)\right)\right\|^2\right]  \\
       &\leq \alpha \sum_{i=1}^N \frac{1}{M} \sum_{k=1}^{m_i}\E\left[\left\|\left(\nabla_w \tilde f_i^{(k)}(w_{t-1}, \theta_{i,t-1})-\nabla_w \tilde f_i^{(k)}(w^*, \theta_{i}^*)\right) \right\|^2\right] \\
       &\qquad + \alpha\sum_{i=1}^N \frac{(1-q)m_i}{qM^2}\sum_{k=1}^{m_i} \E\left[ \left\| \nabla_w \tilde f_i^{(k)}(w_{t-1}, \theta_{i,t-1})-\nabla_w \tilde f_i^{(k)}(w^*, \theta_{i}^*)\right\|^2 \right] \\
       &\qquad + \sum_{i=1}^N \frac{m_{i}}{qM^2} \sum_{k=1}^{m_i}\E\left[\left\|\nabla_\theta \tilde f_i^{(k)}(w_{t-1}, \theta_{i,t-1})-\nabla_\theta \tilde f_i^{(k)}(w^*, \theta_{i}^*)\right\|^2\right]  \\
       &\leq \max\left( \alpha+\frac{\alpha m_{\max}}{qM}, \frac{m_{\max}}{qM}\right)\times \frac{1}{M}\sum_{i=1}^N \sum_{k=1}^{m_i} \E\left[\left\|\nabla \tilde f_i^{(k)}(w_{t-1}, \theta_{i,t-1})-\nabla \tilde f_i^{(k)}(w^*, \theta_{i}^*)\right\|^2\right] \\
       &\leq  \max\left( \alpha + \frac{\alpha m_{\max}}{qM}, \frac{m_{\max}}{qM}\right) \times 2L\left(f^m(z_{t-1}) - f^m(z^*)\right) \\
       &= 2L_{m,\alpha}\left(f^m(z_{t-1}) - f^m(z^*)\right). 
   \end{align*}
\end{proof}

\subsection{Optimizing average user risk with heterogeneous sample sizes}
\label{sub:risk_equal_nonu}

In this section, we study a variant of Algorithm~\ref{alg:sgd_ext} which optimizes the average risk~$f$ instead of the weighted risk~$f^m$ in~\eqref{eq:fq}, as described in Section~\ref{sec:extensions}.
The algorithm is described in Algorithm~\ref{alg:sgd_equal_nonu}, and we provide its generalization and privacy guarantees below.

\begin{algorithm}[tb]
   \caption{PPSGD with client sampling, average user performance}
   \label{alg:sgd_equal_nonu}
\begin{algorithmic}[1]
   \STATE {\bfseries Input:} $q$: client sampling probability,\\
   \hphantom{\bfseries Input:} $m_i$: minibatch sizes, $\eta$: step size,\\
   \hphantom{\bfseries Input:} $\alpha$: global/local ratio,
                               $\sigma_\zeta$: privacy noise level,\\
   \hphantom{\bfseries Input:} $C$: clipping parameter.
   \STATE Initialize $w_0 = \theta_0 = 0$.
   \FOR{$t=1$ {\bfseries to} $T$}
  \STATE Sample~$b_{i,t} \sim Ber(q)$
   \FOR{all clients $i$ with~$b_{i,t} = 1$ in parallel}
   \STATE Sample a minibatch $\{\xi_{i,t}^{(k)}\}_{k=1}^{m_i} \sim P_i^{\otimes m_i}$
   \STATE Compute $g_{\theta,i}^t = \frac{1}{m_i} \sum_{k=1}^{m_i} \nabla_{\theta} f_i(w_{t-1}, \theta_{i,t-1}, \xi_{i,t}^{(k)})$\\
   \hphantom{Compute} $g_{w,i}^t = \frac{1}{m_i} \sum_{k=1}^{m_i} \nabla_w f_i(w_{t-1}, \theta_{i,t-1}, \xi_{i,t}^{(k)})$
   \STATE Update
   $\theta_{i,t} = \theta_{i,t-1} - \frac{\eta}{qN} g_{\theta,i}^t$
   \STATE Clip gradient $\tilde g_{w,i}^t = g_{w,i}^t / \smash{\max(1, \frac{\|g_{w,i}^t\|}{C})}$
   \STATE Send $\tilde g_{w,i}^t$ to the server
   \ENDFOR
   \STATE Sample $\zeta_t \sim \Ncal(0, \sigma_\zeta^2 I_{\dw})$
   \STATE Update $w_t = w_{t-1} - \alpha \eta (\frac{1}{qN}\sum_{i:b_{i,t}=1} \tilde g_{w,i}^t + \zeta_t)$
   \ENDFOR
\end{algorithmic}
\end{algorithm}

\begin{theorem}[Generalization and privacy for Algorithm~\ref{alg:sgd_equal_nonu}]
\label{thm:convergence_sampling_equal}
Under Assumptions~\ref{ass:minimizer},~\ref{ass:conv_smooth}, and~\ref{ass:bounded_grad}, let~$z^* = (w^*, \th^*)$ be any minimizer of~$f$,~$L_{\alpha} = L\max\bigl\{\alpha + \frac{\alpha}{N},\, \frac{1}{N}\bigr\}$, and
\begin{equation}
\label{eq:sigma_q_eq}
    \sigma_{m,\alpha}^2 := \frac{\alpha \bar \sigma_{w,m}^2 + \alpha \tilde \sigma_{w,m}^2 + \bar \sigma_{\theta,m}^2}{qN} + \alpha d_w \sigma_\zeta^2 \leq \frac{\alpha \bar \sigma_{w}^2 + \bar \sigma_{\theta}^2}{qN m_{\min}} + \frac{\alpha \tilde \sigma_{w,m}^2}{qN} + \alpha d_w \sigma_\zeta^2,
\end{equation}
where~$\bar \sigma_{w,m}^2 := \frac{1}{N} \sum_i \frac{\sigma_{w,i}^2}{m_i}$ and~$\bar \sigma_{\theta,m}^2 := \frac{1}{N} \sum_i \frac{\sigma_{\theta,i}^2}{m_i}$ and~$\tilde \sigma_{w,m}^2 := \frac{1}{N} \sum_i q(1-q) \|\nabla_w f_i(w^*, \theta_i^*)\|^2$.

With~$\eta = \min\bigl\{ \frac{1}{4 L_{\alpha}}, \frac{\|z^*\|_\alpha}{\sqrt{T} \sigma_{m,\alpha}}\bigr\}$ and~$C=G$, Algorithm~\ref{alg:sgd_equal_nonu} satisfies
\begin{align*}
\!\!\!\!\!\!
    \E[&f(\bar z_T) - f(z^*)] \leq \frac{4 L_{\alpha} \|z^*\|_\alpha^2}{T} + 3 \frac{\sigma_{m,\alpha}\|z^*\|_\alpha}{\sqrt{T}},
\!\!\!\!\!\!
\end{align*}
with~$\bar z_T = \frac{1}{T} \sum_{t=0}^{T-1} z_t$.

For~$c_1, c_2$ as in Theorem~\ref{prop:privacy_sampling}, and for any~$\epsilon < c_1 q^2 T$, if we take
\[
\sigma_\zeta = c_2 \frac{C \sqrt{T \log(1/\delta)}}{N \epsilon},
\]
then Algorithm~\ref{alg:sgd_equal_nonu} is $(\epsilon,\delta)$-JDP, and the generalization guarantee becomes:
\begin{align}
    \E[f(\bar z_T) - f(z^*)] \lesssim
    \frac{L_{\alpha} \|z^*\|_\alpha^2}{T} + \|z^*\|_\alpha \sqrt{ \frac{ \alpha \bar \sigma_{w,m}^2 + \alpha \tilde \sigma_{w,m}^2 + \bar \sigma_{\theta,m}^2}{q N T} \!+\! \frac{ \alpha \dw G^2 \log(\frac{1}{\delta})}{N^2 \epsilon^2}}.
\end{align}

\end{theorem}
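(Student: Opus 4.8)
The plan is to follow the two-part template already used for the client-sampling algorithm: the generalization bound mirrors the proof of Theorem~\ref{thm:convergence_sampling} and the privacy bound mirrors Theorem~\ref{prop:privacy_sampling}. The only structural change in Algorithm~\ref{alg:sgd_equal_nonu} is that each minibatch gradient is \emph{averaged} (the factor $\frac{1}{m_i}$) rather than summed, and the aggregate updates are rescaled by $\frac{1}{qN}$ rather than $\frac{1}{qM}$. These two changes are precisely what turns the $m_{\max}/M$ dependence into a $1/N$ dependence, the sensitivity $Gm_{\max}$ into $G$, and the variances $\sigma_{w,i}^2$ into $\sigma_{w,i}^2/m_i$.

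For the generalization part, I would first write the joint update as $z_t = z_{t-1} - \eta H^{-1} g_t$ with $H$ as in~\eqref{eq:Hdef}, where $g_t$ has global block $\frac{1}{qN}\sum_i b_{i,t}\frac{1}{m_i}\sum_k \nabla_w f_i(w_{t-1},\theta_{i,t-1},\xi_{i,t}^{(k)}) + \zeta_t$ and local blocks $\frac{b_{i,t}}{qN}\frac{1}{m_i}\sum_k \nabla_\theta f_i(\cdots)$. Since $\E[b_{i,t}]=q$ and a minibatch average is an unbiased estimate of $\nabla f_i$, one checks $\E[g_t\mid\Fcal_{t-1}]=\nabla f(z_{t-1})$ and $\E[g_t^*]=\nabla f(z^*)=0$. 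The standard one-step inequality, exactly as in~\eqref{eq:ztrec_q}, then reads $\E[\|z_t-z^*\|_H^2\mid\Fcal_{t-1}] \le \|z_{t-1}-z^*\|_H^2 - 2\eta(f(z_{t-1})-f(z^*)) + 2\eta^2\E[\|g_t-g_t^*\|_{H^{-1}}^2\mid\Fcal_{t-1}] + 2\eta^2\E[\|g_t^*\|_{H^{-1}}^2]$, using joint convexity (Assumption~\ref{ass:conv_smooth}) and $\|a+b\|^2\le 2\|a\|^2+2\|b\|^2$.

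The two remaining ingredients are direct adaptations. First, I would prove a smoothness lemma analogous to Lemma~\ref{lem:smoothness subsampled}: per-sample $L$-smoothness, Jensen's inequality applied to the convex combination $\sum_{i,k}\frac{1}{Nm_i}(\cdot)$ for the ``mean'' part, Cauchy--Schwarz $\|\frac{1}{m_i}\sum_k v_k\|^2\le\frac{1}{m_i}\sum_k\|v_k\|^2$ for the minibatch averages, and the independent-client split $\frac{b_{i,t}}{q}=1+\frac{b_{i,t}-q}{q}$ (whose cross term vanishes since $\E[b_{i,t}-q]=0$), together give $\E[\|g_t-g_t^*\|_{H^{-1}}^2]\le 2L_\alpha(f(z_{t-1})-f(z^*))$ with $L_\alpha=L\max\{\alpha+\frac{\alpha}{N},\frac{1}{N}\}$; here the $\frac{\alpha}{N}$ in the global coefficient comes from the sampling-variance term $\E[(b_{i,t}-q)^2]=q(1-q)$. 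Second, I would compute $\E[\|g_t^*\|_{H^{-1}}^2]$ directly: the global block splits into within-minibatch variance (yielding $\frac{\alpha\bar\sigma_{w,m}^2}{qN}$, with $\bar\sigma_{w,m}^2=\frac1N\sum_i \sigma_{w,i}^2/m_i$ since averaging divides variance by $m_i$), client-sampling variance (yielding the $\tilde\sigma_{w,m}^2$ term), and the Gaussian term $\alpha d_w\sigma_\zeta^2$; the local block gives $\bar\sigma_{\theta,m}^2/(qN)$, using $\nabla_\theta f_i(w^*,\theta_i^*)=0$ at a minimizer. This produces $\sigma_{m,\alpha}^2$ as in~\eqref{eq:sigma_q_eq}, with the stated inequality following from $1/m_i\le 1/m_{\min}$. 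Telescoping from $t=0$ to $T$ and taking $\eta=\min\{\frac{1}{4L_\alpha},\frac{\|z^*\|_\alpha}{\sqrt T\sigma_{m,\alpha}}\}$ with $z_0=0$ gives the first displayed bound.

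For privacy, I would repeat the argument of Theorem~\ref{prop:privacy_sampling}. Under Assumption~\ref{ass:bounded_grad} with $C=G$, each averaged gradient $\tilde g_{w,i}^t$ has norm at most $G$ (the average of $m_i$ gradients each bounded by $G$), so clipping is inactive and the per-user $\ell_2$ sensitivity is governed by $G$ rather than $Gm_{\max}$. Rewriting the global update as $w_t=w_{t-1}-\frac{\alpha\eta}{qN}\big(\sum_{i:b_{i,t}=1}\tilde g_{w,i}^t + qN\zeta_t\big)$ puts it in the Abadi et al.\ form with total noise standard deviation $qN\sigma_\zeta$ playing the role of $\sigma C$; subsampled-Gaussian amplification then makes the released sequence of global models $(\epsilon,\delta)$-DP under the stated $\sigma_\zeta$, and the billboard lemma (Lemma~\ref{lemma:billboard}) upgrades this to JDP. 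Substituting $\sigma_\zeta=c_2 C\sqrt{T\log(1/\delta)}/(N\epsilon)$ into $\alpha d_w\sigma_\zeta^2$ yields the final excess-risk bound. The main obstacle is purely bookkeeping: tracking the $1/m_i$ averaging factors and the Bernoulli client-sampling decomposition carefully enough to land on the stated $L_\alpha$ and $\sigma_{m,\alpha}^2$, and verifying that the averaging-plus-$1/(qN)$ rescaling is exactly what replaces the $Gm_{\max}/M$ sensitivity of Theorem~\ref{prop:privacy_sampling} with the smaller $G/N$, reflecting that users are now weighted equally.
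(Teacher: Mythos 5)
Your proposal is correct, but it takes a genuinely different route from the paper. The paper's entire proof is a one-line reduction: replace each $f_i$ by $\tilde f_i := \frac{M}{m_i N} f_i$ in Theorems~\ref{prop:privacy_sampling} and~\ref{thm:convergence_sampling}. Under this rescaling, Algorithm~\ref{alg:sgd_equal_nonu} is \emph{exactly} Algorithm~\ref{alg:sgd_ext} run on the losses $\tilde f_i$ (the summed minibatch gradient of $\tilde f_i$ scaled by $\frac{1}{qM}$ equals the averaged gradient of $f_i$ scaled by $\frac{1}{qN}$), and the weighted objective collapses to the average one, $\sum_i \frac{m_i}{M}\tilde f_i = \frac{1}{N}\sum_i f_i = f$; the stated constants then fall out by transporting $\sigma_{w,i}^2 \mapsto \frac{M^2}{m_i^2 N^2}\sigma_{w,i}^2$ (whence $\bar\sigma_{w,m}^2 = \frac{1}{N}\sum_i \sigma_{w,i}^2/m_i$) and the per-user sensitivity $\frac{G m_{\max}}{M} \mapsto \frac{G}{N}$ through the earlier theorems. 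You instead redo the full argument: the $H$-preconditioned one-step recursion, a fresh analogue of Lemma~\ref{lem:smoothness subsampled} via the split $\frac{b_{i,t}}{q} = 1 + \frac{b_{i,t}-q}{q}$, the direct evaluation of $\E\|g_t^*\|_{H^{-1}}^2$, and the subsampled-Gaussian-plus-billboard privacy argument with total noise scale $qN\sigma_\zeta$. Your route is self-contained and makes the origin of every term in \eqref{eq:sigma_q_eq} explicit (within-minibatch averaging contributing $\sigma_{w,i}^2/m_i$, Bernoulli sampling contributing $\tilde\sigma_{w,m}^2$, noise contributing $\alpha d_w \sigma_\zeta^2$), and it transparently handles what the reduction leaves implicit, namely that the earlier proofs tolerate the now-nonuniform smoothness constants $\frac{M}{m_i N}L$; the paper's reduction buys brevity and immunity to exactly the bookkeeping you flag as the main obstacle. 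One caveat you share with the paper: tracking the sampling-variance term carefully yields $L\max\bigl\{\alpha + \frac{\alpha(1-q)}{qN},\, \frac{1}{qN}\bigr\}$ rather than the stated $L\max\bigl\{\alpha + \frac{\alpha}{N},\, \frac{1}{N}\bigr\}$, the same $q$-discrepancy that already exists between the statement of Theorem~\ref{thm:convergence_sampling} and its appendix proof, so your claim that the decomposition lands exactly on the stated $L_\alpha$ should either carry the factors of $q$ or note that they are being suppressed.
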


\begin{proof}
   The proof can be obtained by replacing~$f_i$ by~$\tilde f_i = \frac{M}{m_i N} f_i$  in the proofs of Theorem~\ref{thm:convergence_sampling} and Theorem~\ref{prop:privacy_sampling}.
\end{proof}

\end{document}